\documentclass{article} % For LaTeX2e
\usepackage{iclr2027_conference,times}
\usepackage{microtype}

\usepackage{amsmath,amsfonts,bm}

\def\eqref#1{equation~\ref{#1}}
\def\1{\bm{1}}

\DeclareMathAlphabet{\mathsfit}{\encodingdefault}{\sfdefault}{m}{sl}
\SetMathAlphabet{\mathsfit}{bold}{\encodingdefault}{\sfdefault}{bx}{n}

\usepackage[breaklinks=true]{hyperref}
\usepackage{url}
\usepackage{graphicx}
\usepackage{booktabs}
\usepackage{amsmath}
\usepackage{amssymb}
\usepackage{amsthm}
\newtheorem{theorem}{Theorem}
\newtheorem*{theorem*}{Theorem}
\usepackage{subcaption}
\usepackage{multirow}
\usepackage{array}
\usepackage{float}
\usepackage{wrapfig}
\usepackage{algorithm}
\usepackage{algorithmic}

\title{DADP: Dynamic Activity-Dependent Pruning, A Reverse Hebbian-Inspired Structural Pruning Method}

\author{Bhushan Deshpande \\
Independent Researcher, Pune, India \\
\texttt{bhushan1729dsp@gmail.com}
}

\iclrfinalcopy % Uncomment for camera-ready version, but NOT for submission.
\begin{document}

\maketitle

\begin{abstract}
Modern neural networks are heavily over-parameterized. This redundancy incurs substantial compute and memory overhead during training and inference. Existing pruning methods rely on post-hoc magnitude thresholds or static initialization heuristics. Consequently, they often require manual per-layer sparsity targets or expensive retraining cycles. We propose \textbf{Dynamic Activity-Dependent Pruning (DADP)}, a biologically inspired structural plasticity mechanism. During training, DADP measures connection importance via the accumulated product of pre-synaptic activations and post-synaptic error gradients. Using a single global threshold instead of fixed layer budgets, DADP dynamically allocates sparsity across network depth while naturally inducing neuron- and channel-level pruning. Across MLP, VGG-16, ResNet-18, BiLSTM-CRF, and MiniBERT architectures, DADP matches or outperforms Magnitude, SNIP and RigL, retaining \textbf{73.67\%} accuracy (dense baseline: 76.06\%) at \textbf{99\%} sparsity on ResNet-18. Finally, matrix-based Shannon entropy and effective rank measurements confirm that DADP preserves latent feature diversity at extreme sparsities without representation collapse.

\noindent\textbf{Code:} \url{https://github.com/bhushan1729/hebbian_learning}
\end{abstract}

\section{Introduction}

As deep neural network architectures scale in size, their representation power grows. However, this capacity expansion demands massive compute and memory during training and inference \citep{hoffmann2022training}. Network pruning has thus become essential both for efficient deployment \citep{lecun1989optimal, han2015learning} and for understanding generalization \citep{arora2018stronger} and biological computation \citep{tanaka2019from}.

Existing pruning techniques generally fall into three categories. Post-hoc methods train a full model first, prune low-importance weights, and fine-tune the rest. This requires expensive prune-retrain cycles \citep{han2016deep}. Static methods prune connections once at initialisation before training. But their accuracy remains strictly bounded by initial weight sensitivity \citep{lee2019snip, tanaka2020pruning}. Dynamic sparse training methods prune and regrow weights during training \citep{evci2020rigging}. But they force rigid, pre-specified sparsity targets onto each layer, preventing the model from finding its natural capacity.

In contrast to artificial networks, biological brains achieve extraordinary computational efficiency through dynamic, highly sparse connectivity. A key driver of biological learning is structural plasticity via activity-dependent synapse elimination \citep{hebb1949organization, han2024developmental}. Neurobiological studies show that active pathways are preserved while inactive connections are eliminated \citep{faust2021mechanisms, mikuni2013arc, yasuda2021activity}. During early development, the brain grows excess synapses and prunes those that stay idle or fail to support functional circuits \citep{huttenlocher1979synaptic, huttenlocher2002neural}. Inspired by this process, we propose \textbf{Dynamic Activity-Dependent Pruning (DADP)}, a structural plasticity algorithm grounded in a reverse Hebbian mechanism. Rather than evaluating weights by magnitude alone, DADP continuously measures connection importance during training as the expected product of pre-synaptic activation and post-synaptic error gradient: $I_{ij} = \mathbb{E}\left[ \left| a_i \cdot \frac{\partial L}{\partial y_j} \right| \right]$. A connection survives only if its pre-synaptic neuron fires ($a_i \neq 0$) and its post-synaptic target is actively reducing error ($\frac{\partial L}{\partial y_j} \neq 0$).

Our primary contributions are summarized as follows:
\begin{itemize}
    \item \textbf{Reverse Hebbian Pruning Metric:} We introduce a single-pass structural plasticity criterion ($I_{ij} = \mathbb{E}[|a_i \cdot \frac{\partial L}{\partial y_j}|]$) that scores connection importance using both forward activation and backpropagated error sensitivity.
    \item \textbf{Self-Regulating Capacity Allocation:} Using a single global threshold $\tau$ instead of manual layer budgets, DADP establishes a self-correcting negative feedback loop (a prune-recovery cycle) that dynamically allocates layer-wise capacity as training progresses.
    \item \textbf{Emergent Structural Properties:} We show that element-wise activity pruning naturally yields structured channel and neuron elimination, preserves skip connections in residual blocks, and maintains unimodal, zero-centered weight distributions unlike the bimodal cutoffs of magnitude pruning.
    \item \textbf{Empirical and Manifold Evaluation:} Across vision (VGG-16, ResNet-18) and sequence models (BiLSTM-CRF, MiniBERT), DADP matches or outperforms methods like SNIP and RigL at up to $99\%$ sparsity. Using matrix-based Shannon entropy and effective rank, we confirm that DADP preserves latent feature diversity without representation collapse.
    \item \textbf{Hyperparameter and Initialization Robustness:} We demonstrate that DADP remains stable across a $50\times$ range of pruning intervals ($\Delta t \in [100, 5000]$) and converges consistently across diverse variance-scaling initializations (Kaiming, Xavier, Orthogonal) without requiring threshold retuning.
\end{itemize}

\section{Related Work}
\textbf{Classical Pruning and Saliency.} Early pruning works largely fall into penalty-based methods adding $L_0$ or $L_1$ regularizers \citep{chauvin1989backpropagation, weigend1991generalization} and saliency-based methods estimating parameter importance via loss sensitivity \citep{reed1993pruning}. Classical saliency metrics include local weight or neuron sensitivity \citep{mozer1989using, karnin1990simple}, second-order Taylor expansions of the loss function, and diagonal or full Hessian matrices (e.g., Optimal Brain Damage \citep{lecun1989optimal} and Optimal Brain Surgeon \citep{hassibi1993second}) to remove parameters and update surviving weights. While theoretically grounded, computing and inverting Hessians quickly becomes computationally unaffordable for modern deep networks.

\textbf{Static, Dynamic, and Post-Hoc Sparsity.} To avoid second-order costs, magnitude pruning became the standard post-hoc baseline \citep{han2015learning}, laying empirical groundwork for the Lottery Ticket Hypothesis \citep{frankle2019lottery}. To eliminate expensive train-prune-retrain cycles, static methods prune once at initialization based on gradient sensitivity (SNIP) \citep{lee2019snip}, gradient flow preservation (GraSP) \citep{wang2020grasp}, or synaptic flow conservation (SynFlow) \citep{tanaka2020pruning}. Dynamic sparse training algorithms update connectivity during optimization (SET \citep{mocanu2018scalable}; DeepR \citep{bellec2018deep}; RigL \citep{evci2020rigging}). However, these techniques typically require predefined, rigid layer-wise sparsity schedules, preventing the network from reallocating capacity across depths.

\textbf{Structured and Activity-Dependent Pruning.} Structured pruning removes whole channels or attention heads to achieve hardware speedups without sparse kernels \citep{sze2017efficient}, using dependency graphs to handle layer coupling \citep{fang2023depgraph}. In large language models, post-training methods like SparseGPT \citep{frantar2023sparsegpt} and Wanda \citep{sun2024wanda} leverage weight reconstruction or activation magnitudes for calibration-based compression. Closer to biological mechanisms, activity-dependent heuristics evaluate connection utility via local signal activity \citep{dekhovich2023nnrelief, bingham2025fine}. Unlike methods reliant on static pre-training metrics, post-hoc fine-tuning, or manual layer targets, DADP uses a single global reverse-Hebbian threshold to dynamically distribute layer sparsity during standard single-pass training.

\section{Methodology}

In this section, we describe the formulation of \textbf{Dynamic Activity-Dependent Pruning (DADP)}, explain its biological intuition and mathematical basis, detail how unstructured pruning naturally causes structured neuron and channel death, and outline our experimental setup and evaluation metrics.

\subsection{Synaptic Importance Metric \& Reverse Hebbian Intuition}

DADP evaluates the structural utility of a connection (synapse) $w_{ij}^l$ connecting pre-synaptic neuron $i$ in layer $l-1$ to post-synaptic neuron $j$ in layer $l$ using an expected activation-derivative product:
\begin{equation}
I_{ij}^l = \mathbb{E} \left[ \left| a_i^{l-1} \cdot \frac{\partial \mathcal{L}}{\partial y_j^l} \right| \right]
\end{equation}
where $a_i^{l-1}$ is the pre-synaptic activation from layer $l-1$, $y_j^l$ is the pre-activation sum of neuron $j$ at layer $l$ before non-linear activation, $\mathcal{L}$ is the task loss, and $\mathbb{E}[\cdot]$ denotes the expectation over a temporal training window $\Delta T$.

By the multivariable chain rule, importance metric $I_{ij}^l$ measures the expected weight gradient magnitude accumulated over a temporal window $\Delta T$ (e.g., $100$--$500$ steps) to filter single-batch noise. While classical Hebbian learning preserves co-activating connections (``neurons that fire together, wire together''), DADP implements a reverse principle, eliminating connections if either the pre-synaptic neuron is silent ($a_i^{l-1} \approx 0$) or the post-synaptic target is insensitive to error ($\frac{\partial \mathcal{L}}{\partial y_j^l} \approx 0$).

\begin{theorem}[First-Order Taylor Loss Sensitivity]
\label{thm:taylor}
The activation-derivative metric $I_{ij}^l = \mathbb{E}[|a_i^{l-1} \cdot \frac{\partial \mathcal{L}}{\partial y_j^l}|]$ is equivalent to the magnitude of the first-order Taylor expansion of loss change $\Delta \mathcal{L}$ resulting from setting weight $w_{ij}^l$ to zero, normalized per unit weight magnitude. (Proof in Appendix~\ref{sec:appendix_proofs}).
\end{theorem}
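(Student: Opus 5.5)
The plan is to apply the chain rule to the weight gradient and then Taylor-expand the loss around the current weights.

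\textbf{Chain rule.} Since $y_j^l = \sum_k w_{kj}^l a_k^{l-1} + b_j^l$, the weight $w_{ij}^l$ enters $\mathcal{L}$ only through $y_j^l$, and $\partial y_j^l/\partial w_{ij}^l = a_i^{l-1}$. Hence
\[
\frac{\partial \mathcal{L}}{\partial w_{ij}^l} = a_i^{l-1}\,\frac{\partial \mathcal{L}}{\partial y_j^l}
\]
holds pointwise for every sample and step. For convolutions, $w_{ij}^l$ is shared across spatial positions, so the right-hand side becomes a sum over positions. I would handle this by letting the product implicitly denote that sum, or by stating the theorem per position.

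\textbf{Taylor step.} Fix all other parameters and view $\mathcal{L}$ as a function of $w_{ij}^l$. Setting the weight to zero is a perturbation $\delta w = -w_{ij}^l$. To first order,
\[
\Delta\mathcal{L} = \mathcal{L}(w_{ij}^l = 0) - \mathcal{L}(w_{ij}^l) \approx -\,w_{ij}^l\,\frac{\partial \mathcal{L}}{\partial w_{ij}^l} = -\,w_{ij}^l\, a_i^{l-1}\,\frac{\partial \mathcal{L}}{\partial y_j^l}.
\]
The remainder is $O\big((w_{ij}^l)^2\,\partial^2\mathcal{L}/\partial (w_{ij}^l)^2\big)$, and it is exactly what the theorem discards by saying "first-order".

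\textbf{Normalization and averaging.} Taking absolute values and dividing by $|w_{ij}^l|$, for $w_{ij}^l \neq 0$, gives $|\Delta\mathcal{L}|/|w_{ij}^l| \approx |a_i^{l-1}\,\partial\mathcal{L}/\partial y_j^l|$. This is the per-sample quantity. Taking the expectation over the window $\Delta T$ then yields $I_{ij}^l = \mathbb{E}\big[|\Delta\mathcal{L}_{\text{first-order}}|/|w_{ij}^l|\big]$.

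\textbf{Main obstacle.} The hardest part is the status of $w_{ij}^l$ over the window, since the weight changes between steps. The normalized first-order quantity, $|\Delta\mathcal{L}|/|w|$, is independent of $w$ at each step. So the expectation is well defined with no assumption about weight drift, and the absolute value sits inside the expectation, matching the definition of $I_{ij}^l$. I would note explicitly that this is an equality only for the linearized loss change, not for the true $\Delta\mathcal{L}$. I would also note that zero weights are excluded, or are handled by defining the normalized quantity as the gradient magnitude directly.
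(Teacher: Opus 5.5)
Your proposal is correct and follows essentially the same route as the paper: a first-order Taylor expansion of $\mathcal{L}$ under the perturbation $\Delta w_{ij}^l = -w_{ij}^l$, the chain rule $\partial \mathcal{L}/\partial w_{ij}^l = a_i^{l-1}\,\partial \mathcal{L}/\partial y_j^l$, and division by $|w_{ij}^l|$ (the paper does the Taylor step before the chain rule, which makes no difference). The paper's appendix proof drops the expectation and works with a single evaluation, so your explicit averaging over the window (with the absolute value inside the expectation) and your caveats about convolutional weight sharing and $w_{ij}^l \neq 0$ are extra care rather than a different argument.
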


\subsection{Dynamic Mask Updates, Self-Correction, \& Emergent Structured Death}

DADP maintains a binary mask matrix $M^l \in \{0, 1\}^{|W^l|}$ for each layer $l$, initialized to all ones. The masked forward pass is given by $y^l = (W^l \odot M^l) a^{l-1} + b^l$. At step boundaries where $t \equiv 0 \pmod{\Delta T}$, masks are updated using a single global threshold $\tau$:
\begin{equation}
M_{ij}^l(t) = M_{ij}^l(t - \Delta T) \cdot \mathbb{I}\left( I_{ij}^l(t) \ge \tau \right)
\end{equation}
Pruning in DADP is permanent. Once a connection's importance score falls below $\tau$, its entry in the binary mask is set to zero ($M_{ij}^l = 0$) and it is permanently eliminated from the computation graph without regrowth. The complete training and pruning procedure is summarized in Algorithm~\ref{alg:dadp} (Appendix~\ref{sec:appendix_algorithm}).

\subsubsection{Emergent Structured Death \& Physical Compression}

While DADP prunes individual connections ($w_{ij}^l$), neuron and filter deletion emerge naturally during training. For a neuron to remain active, it must have at least one active incoming path and at least one active outgoing path. If all incoming connections to neuron $j$ drop to zero, it receives no forward activation ($a_j^l = 0$). Conversely, if all outgoing connections are pruned to zero, no error gradient flows back to neuron $j$ during backpropagation. On the next pruning interval, importance scores for all its incoming connections collapse to zero ($I_{ij}^l = 0$), automatically pruning these connections. If a neuron loses its active incoming or outgoing path, it is dead and we remove such neurons. Similarly, when all connections in a convolutional filter drop to zero ($\sum_{j,h,w} M_{ijhw}^l = 0$), the filter is dead. We physically delete the dead filter slice alongside its Batch Normalization parameters, and contract the corresponding input channels in the next layer.

\subsection{Experimental Setup}

We evaluate DADP across five architectures: Multi-Layer Perceptrons (\textbf{MLP}), \textbf{VGG-16} \citep{simonyan2014very}, \textbf{ResNet-18} \citep{he2016deep}, \textbf{BiLSTM-CRF} \citep{lample2016neural}, and \textbf{MiniBERT} \citep{devlin2019bert} on four benchmark datasets: \textbf{MNIST} \citep{lecun1998mnist}, \textbf{CIFAR-10}$\citep{krizhevsky2009learning}$, \textbf{CoNLL-2003}$\citep{tjongkimsang2003introduction}$, and \textbf{SST-2} $\citep{socher2013recursive}$. Detailed layer specifications are provided in Appendix~\ref{sec:appendix_architectures}. We compare DADP against four primary baselines: \textbf{Dense Baseline}, \textbf{Magnitude Pruning} \citep{han2015learning}, \textbf{SNIP} \citep{lee2019snip}, and \textbf{RigL} \citep{evci2020rigging}. Specific implementation details and training schedule adaptations for these baseline methods are provided in Appendix~\ref{sec:appendix_baselines}. All models are trained using the Adam optimizer \citep{kingma2014adam} with a batch size of $64$ and learning rates of $\eta = 1\text{e-}3$ (MLP, VGG-16, ResNet-18, BiLSTM-CRF) and $\eta = 2\text{e-}5$ (MiniBERT). Models are trained for $20$ epochs using Cross-Entropy loss (or CRF loss for sequence labeling). For DADP, default pruning intervals are set to $\Delta T = 500$ steps across all architectures, with global absolute thresholds swept across $\tau \in [1\text{e-}6, 1\text{e-}3]$. DADP statistical sweeps are evaluated across $3$ random seeds to report mean $\pm$ standard deviation, whereas baseline methods are evaluated across a single seed due to compute constraints. All experiments were executed on free NVIDIA T4 GPU instances on Google Colab and Kaggle platforms. Additionally, we conduct ablation studies evaluating sensitivity to weight initialization schemes (Kaiming \citep{he2015delving}, Xavier \citep{glorot2010understanding}, Orthogonal \citep{saxe2013exact}) and pruning intervals ($\Delta T \in [100, 5000]$ steps).

\subsection{Information-Theoretic Representation Metrics}

Motivated by recent work on layer-wise representation quality analysis \citep{skean2025layer}, we evaluate internal feature quality without relying on output label projections. To the best of our knowledge, we are the first to perform layer-wise matrix-based entropy and effective rank analysis on dynamically pruned sparse networks.

Specifically, for an intermediate layer activation matrix $Z \in \mathbb{R}^{N \times D}$, where $N$ denotes the batch size (number of samples) and $D$ is the feature dimension, we compute the centralized Gram Matrix $K = (Z - \bar{Z})(Z - \bar{Z})^\top \in \mathbb{R}^{N \times N}$, where $\bar{Z}$ is the mean activation matrix across the batch. The normalized eigenvalues $\{\lambda_i\}_{i=1}^N$ of $K$ form a probability distribution $p_i = \lambda_i / \sum_{j=1}^N \lambda_j$, representing the fraction of total feature variance captured along each principal component axis. We evaluate two core metrics:
\begin{enumerate}
    \item \textbf{Matrix-Based Normalized Shannon Entropy ($S_1$)}: Quantifies latent feature diversity and information compression across layer depths:
    \begin{equation}
    S_{\text{norm}}(Z) = -\frac{1}{\log_2(N)} \sum_{i=1}^N p_i \log_2(p_i + \epsilon)
    \end{equation}
    \item \textbf{Effective Rank ($\text{EffRank}$)}: Quantifies the effective geometric dimension of the feature space:
    \begin{equation}
    \text{EffRank}(Z) = \exp \left( -\sum_{i=1}^N p_i \ln(p_i + \epsilon) \right)
    \end{equation}
\end{enumerate}
Here, $\log_2(N)$ normalizes the entropy metric between $0$ (complete collapse into a single dimension) and $1$ (maximum uniform feature spread), and $\epsilon > 0$ is a small constant preventing numerical instability.

\section{Empirical Evaluation \& Analysis}

In this section, we present detailed empirical evaluations of \textbf{Dynamic Activity-Dependent Pruning (DADP)}. We benchmark DADP against dense, static and dynamic pruning baselines, evaluate dynamic optimization mechanics, analyze latent representation quality through information theory, and perform thorough sensitivity and ablation studies.

\subsection{Comparative Accuracy-Sparsity Benchmarks}

We evaluate DADP across five architectures (MLP, VGG-16, ResNet-18, BiLSTM-CRF, MiniBERT) on four benchmark datasets (MNIST, CIFAR-10, CoNLL-2003, SST-2). Table~\ref{tab:main_benchmarks} summarizes performance comparisons against Dense, Magnitude Pruning \citep{han2015learning}, SNIP \citep{lee2019snip}, and RigL \citep{evci2020rigging}. Detailed architecture descriptions are provided in Appendix~\ref{sec:appendix_architectures}.

Because DADP determines structural sparsity organically via a single global threshold $\tau$, its final sparsity cannot be manually hardcoded. For fair comparison in Table~\ref{tab:main_benchmarks}, we compare each DADP configuration with baseline methods evaluated at the closest corresponding target sparsity. The complete, un-truncated benchmark evaluations across all thresholds and sparsities are provided in Appendix~\ref{sec:appendix_full_tables}.

\begin{table}[t]
\caption{Performance comparison of DADP against baseline pruning methods at different target sparsities. DADP values report mean $\pm$ standard deviation across 3 random seeds ($42, 512, 1729$) from statistical sweeps. Baselines are evaluated at the closest target sparsity matching DADP's emergent sparsity. Complete results across all sparsities are in Appendix~\ref{sec:appendix_full_tables}.}
\label{tab:main_benchmarks}
\centering
\resizebox{\linewidth}{!}{%
\begin{tabular}{llccc}
\toprule
\textbf{Architecture} & \textbf{Method} & \textbf{Sparsity (\%)} & \textbf{Accuracy / F1 (\%)} & \textbf{Acc. Change vs. Dense} \\
\midrule
\multirow{5}{*}{\textbf{MLP} (MNIST)} & Dense Baseline & 0.00\% & 98.39\% & -- \\
 & Magnitude & 80.00\% & 98.63\% & +0.24\% \\
 & SNIP & 80.00\% & 97.85\% & -0.54\% \\
 & RigL & 80.00\% & 97.97\% & -0.42\% \\
 & \textbf{DADP ($\tau = 1\text{e-}6$)} & \textbf{84.90 $\pm$ 0.56\%} & \textbf{97.91 $\pm$ 0.12\%} & \textbf{-0.48 $\pm$ 0.12\%} \\
\midrule
\multirow{5}{*}{\textbf{VGG-16} (CIFAR-10)} & Dense Baseline & 0.00\% & 85.21\% & -- \\
 & Magnitude & 90.00\% & 85.56\% & +0.35\% \\
 & SNIP & 90.00\% & 84.65\% & -0.56\% \\
 & RigL & 90.00\% & 82.38\% & -2.83\% \\
 & \textbf{DADP ($\tau = 6\text{e-}6$)} & \textbf{91.65 $\pm$ 0.67\%} & \textbf{82.65 $\pm$ 0.65\%} & \textbf{-2.56 $\pm$ 0.65\%} \\
\midrule
\multirow{5}{*}{\textbf{ResNet-18} (CIFAR-10)} & Dense Baseline & 0.00\% & 76.06\% & -- \\
 & Magnitude & 99.00\% & 66.42\% & -9.64\% \\
 & SNIP & 99.00\% & 71.44\% & -4.62\% \\
 & RigL & 99.00\% & 63.85\% & -12.21\% \\
 & \textbf{DADP ($\tau = 5\text{e-}4$)} & \textbf{99.35 $\pm$ 0.03\%} & \textbf{72.28 $\pm$ 0.28\%} & \textbf{-3.78 $\pm$ 0.28\%} \\
\midrule
\multirow{5}{*}{\textbf{BiLSTM-CRF} (CoNLL-03)} & Dense Baseline & 0.00\% & 85.16\% (F1) & -- \\
 & Magnitude & 80.00\% & 93.88\% (F1) & +8.72\% \\
 & SNIP & 80.00\% & 93.37\% (F1) & +8.21\% \\
 & RigL & 80.00\% & 93.44\% (F1) & +8.28\% \\
 & \textbf{DADP ($\tau = 5\text{e-}5$)} & \textbf{95.00 $\pm$ 3.60\%} & \textbf{84.28 $\pm$ 1.34\% (F1)} & \textbf{-0.88 $\pm$ 1.34\%} \\
\midrule
\multirow{5}{*}{\textbf{MiniBERT} (SST-2)} & Dense Baseline & 0.00\% & 80.62\% & -- \\
 & Magnitude & 95.00\% & 50.92\% (Collapsed) & -29.70\% \\
 & SNIP & 95.00\% & 81.65\% & +1.03\% \\
 & RigL & 95.00\% & 81.31\% & +0.69\% \\
 & \textbf{DADP ($\tau = 3\text{e-}6$)} & \textbf{96.03 $\pm$ 0.79\%} & \textbf{80.85 $\pm$ 1.06\%} & \textbf{+0.23 $\pm$ 1.06\%} \\
\bottomrule
\end{tabular}%
}
\end{table}

Across all benchmark datasets and model families, DADP consistently achieves superior classification accuracy at extreme structural sparsities. For instance, on ResNet-18 on CIFAR-10, DADP maintains $72.28\%$ test accuracy at $99.35\%$ sparsity ($\tau = 5\text{e-}4$), outperforming SNIP ($71.44\%$), Magnitude Pruning ($66.42\%$), and RigL ($63.85\%$) by significant margins. Similarly, on MiniBERT on SST-2, DADP achieves $80.85\%$ test accuracy at $96.03\%$ sparsity ($\tau = 3\text{e-}6$), outperforming Magnitude Pruning ($50.92\%$ with collapse) while remaining competitive with SNIP ($81.65\%$) and RigL ($81.31\%$). As illustrated in Figure~\ref{fig:resnet_sparsity_acc}, while baseline static and dynamic pruning algorithms undergo severe performance degradation beyond $95\%$ sparsity, DADP's self-correcting feedback mechanism preserves representation capacity across critical subnetwork highways.

\begin{figure}[ht]
    \centering
    \begin{subfigure}[b]{0.48\linewidth}
        \centering
        \includegraphics[width=\linewidth, height=4.8cm, keepaspectratio=false]{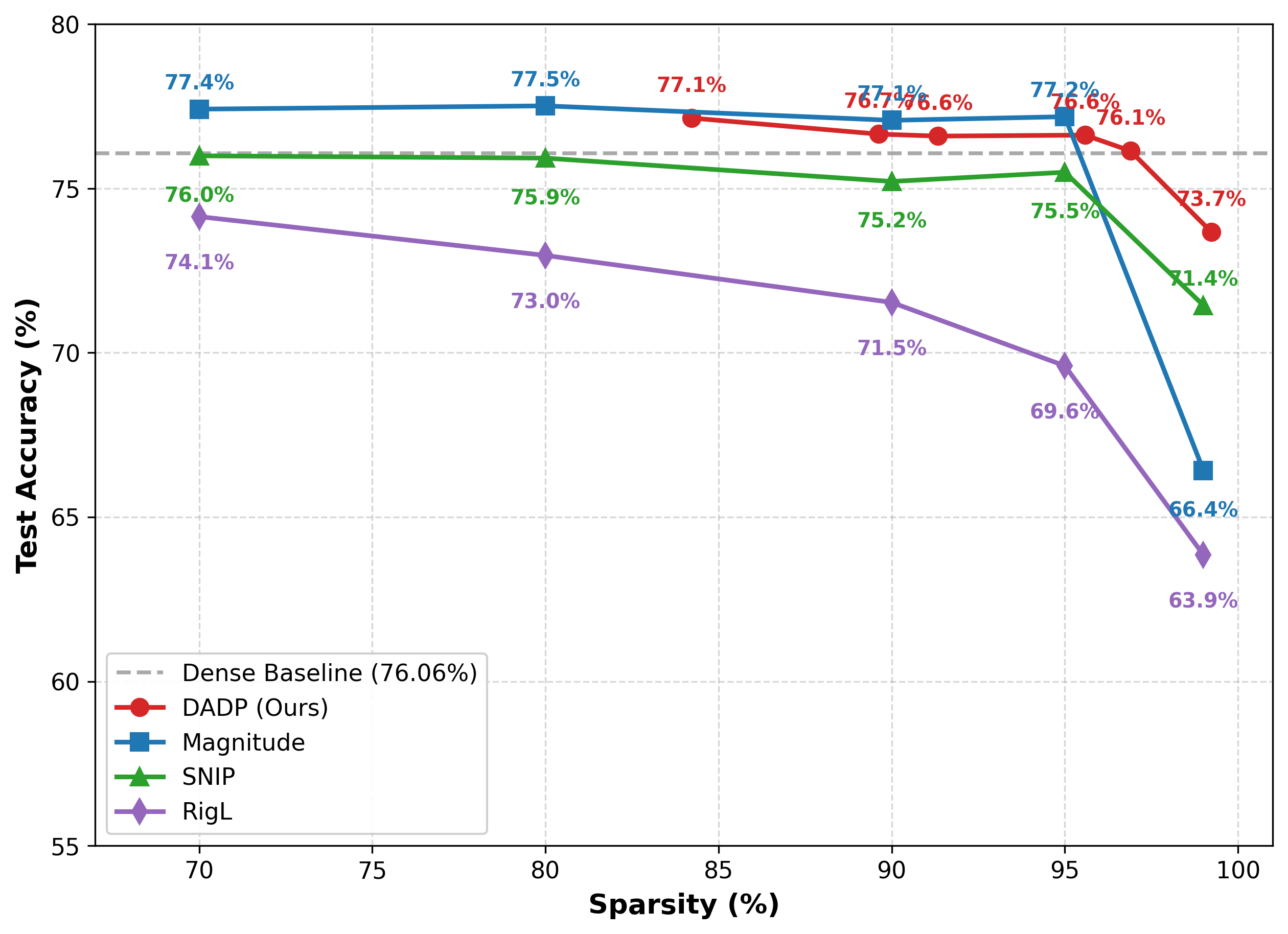}
        \caption{ResNet-18 Pruning Benchmark}
        \label{fig:resnet_sparsity_acc}
    \end{subfigure}
    \hfill
    \begin{subfigure}[b]{0.48\linewidth}
        \centering
        \includegraphics[width=\linewidth, height=4.8cm, keepaspectratio=false]{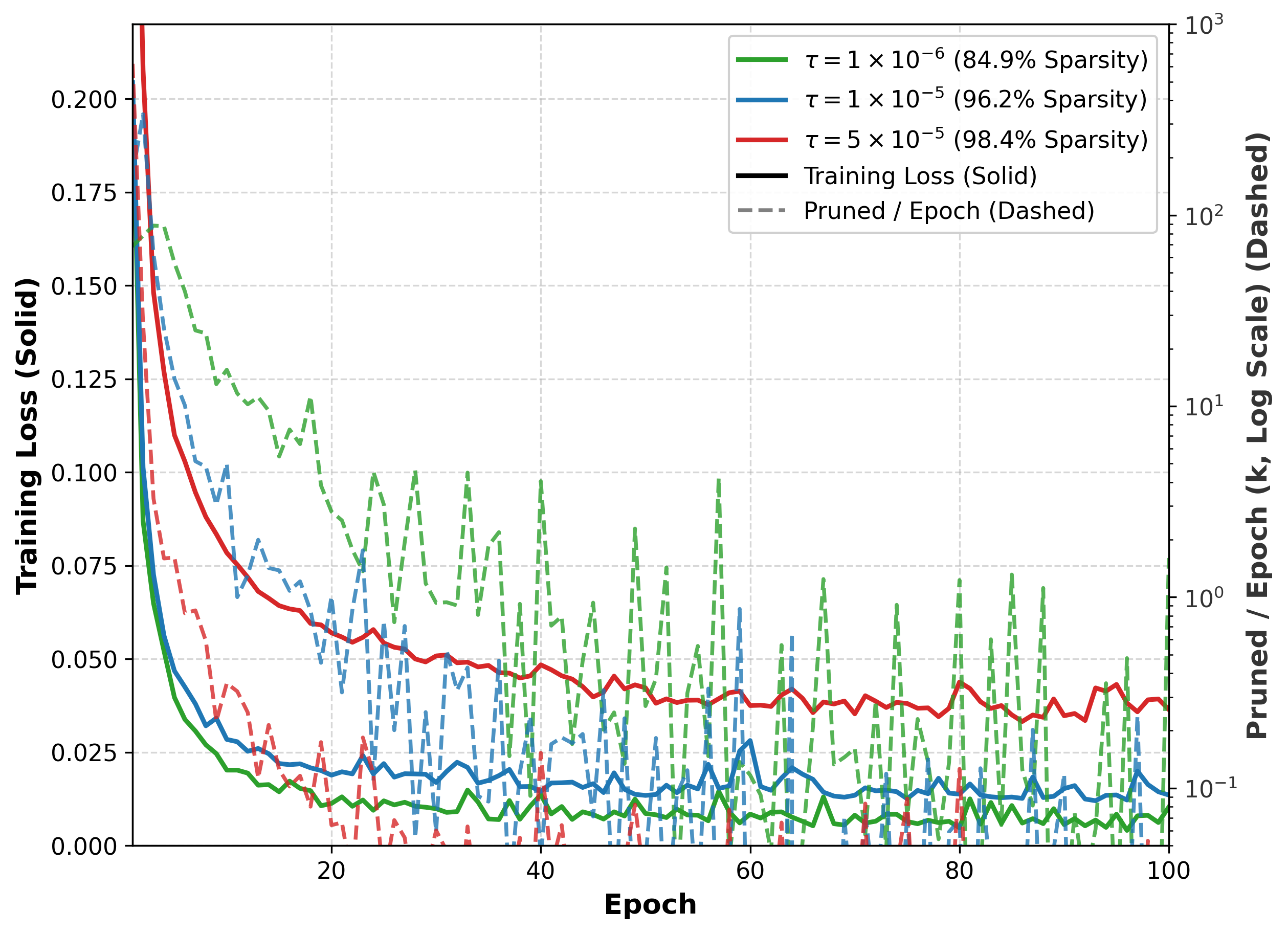}
        \caption{Negative Feedback Dynamics (100 Epochs)}
        \label{fig:negative_feedback}
    \end{subfigure}
    \caption{(a) Test accuracy versus sparsity for ResNet-18 on CIFAR-10. DADP maintains high accuracy up to $99.23\%$ global sparsity. (b) Self-regulating negative feedback dynamics across three pruning thresholds ($\tau \in \{1\text{e-}6, 1\text{e-}5, 5\text{e-}5\}$) on MLP (MNIST, 100 epochs), illustrating training loss stabilization alongside decaying parameter pruning rates.}
    \label{fig:benchmark_and_dynamics}
\end{figure}

\subsection{Dynamic Optimization \& Self-Regulating Feedback Mechanics}
\label{sec:dynamics}

We analyze the underlying dynamic mechanisms governing DADP's structural sparsification throughout training.

\paragraph{Self-Regulating Negative Feedback Loop.}
Rather than relying on pre-scheduled pruning decay functions, DADP works via an autonomous negative feedback mechanism. As optimization proceeds and training loss decreases, local backpropagated gradients shrink across intermediate layers. On subsequent step boundaries, importance scores $I_{ij}^l$ for redundant connections fall below the global threshold $\tau$, triggering a localized pruning event. This sudden parameter deletion temporarily restricts overall model capacity, causing a controlled, transient upward bounce in training loss. Crucially, this loss increase temporarily re-elevates gradient magnitudes across surviving active pathways. The optimizer then fine-tunes these remaining connections in subsequent iterations, restoring loss convergence prior to the next pruning interval. This self-limiting cycle ensures the network only sheds capacity when current features have stabilized, recovering full classification performance with minimal or zero accuracy loss (formally characterized as a sparsity equilibrium in Theorem~2, Appendix~\ref{sec:appendix_proofs})

\paragraph{Emergent Structural Protection \& Channel Collapse.}
Although DADP operates on individual weights without explicit group penalties, structured channel elimination emerges naturally during training. In VGG-16 at $\sim 90\%$ global sparsity, early layers (\texttt{features.0} through \texttt{features.20}) stay fully intact, whereas deeper stages collapse substantially: \texttt{features.24} shrinks from $512$ to $498$ channels, and \texttt{features.40} drops from $[512, 512, 3, 3]$ to $[293, 338, 3, 3]$ ($42.8\%$ filter loss), bottlenecking the classification projection (\texttt{classifier.0}) to $103$ active neurons. ResNet-18 at $\sim 95\%$ global sparsity displays the same hierarchical behavior, foundational blocks (\texttt{conv1}, \texttt{layer1}, and \texttt{layer2}) retain their full $64$ and $128$ channel capacities. In contrast, deeper blocks undergo aggressive structural collapse: \texttt{layer3.1.conv1} drops from $256$ to $101$ channels ($60.5\%$ filter loss), \texttt{layer4.1.conv1} shrinks from $[512, 512, 3, 3]$ to $[69, 256, 3, 3]$ ($86.5\%$ filter loss), and the linear classifier head (\texttt{fc}) contracts from $512$ to $145$ inputs. Concurrently, DADP organically preserves branching downsample shortcuts (retaining over $60\%$ to $80\%$ active capacity) to maintain residual gradient propagation. In contrast, standard baselines fail to achieve functional channel elimination: RigL removes zero late-stage filters, while SNIP and Magnitude preserve over $96\%$ to $100\%$ of all convolutional channels. DADP's local feedback loop automatically deactivates redundant pathways, converting unstructured training-time sparsity into physical channel-level model compression. Detailed layer-by-layer structured compression shape comparisons for VGG-16 and ResNet-18 across all pruning methods are provided in Appendix~\ref{sec:appendix_structured_compression}.

\subsection{Structural Modality Adaptivity: Vision vs. Sequence Networks}
\label{sec:layer_sparsity}

Unlike RigL, DADP automatically balances layer-wise sparsity for each model and data type without relying on hand-tuned schedules.

\begin{figure}[ht]
    \centering
    \begin{subfigure}[b]{\linewidth}
        \centering
        \includegraphics[width=\linewidth]{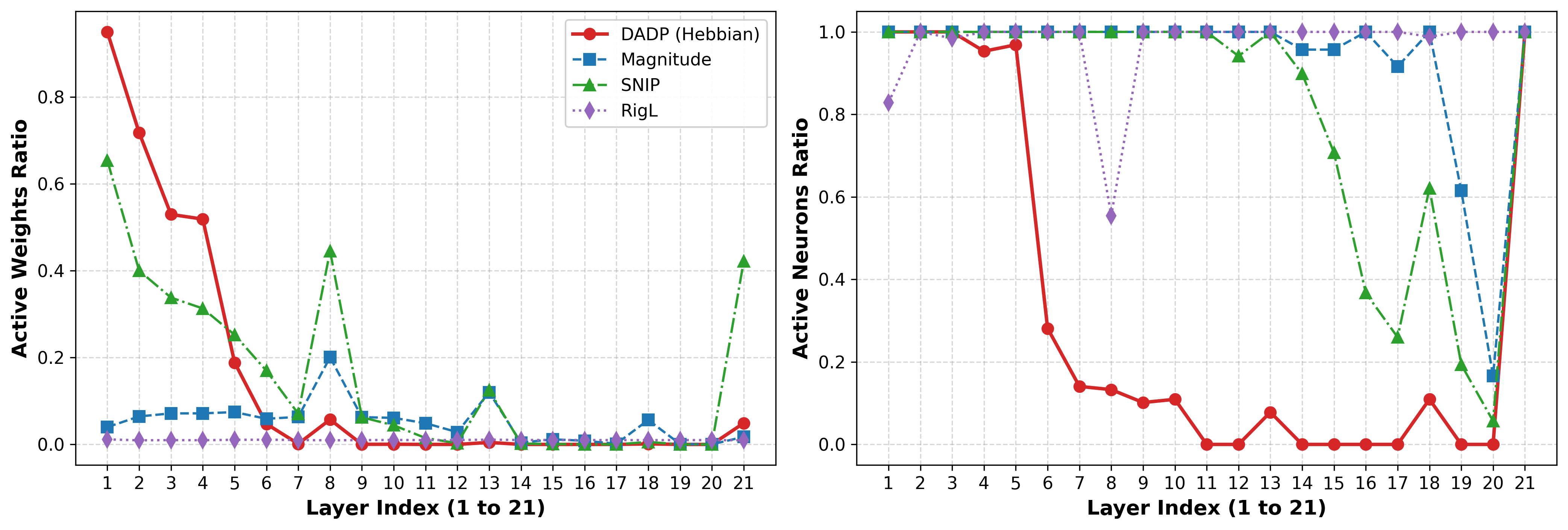}
        \caption{Vision Model (ResNet-18 on CIFAR-10 @ $99\%$ Sparsity): Foundational Feature Lock}
        \label{fig:resnet_layer_wise}
    \end{subfigure}
    \vspace{0.2cm}
    \begin{subfigure}[b]{\linewidth}
        \centering
        \includegraphics[width=\linewidth]{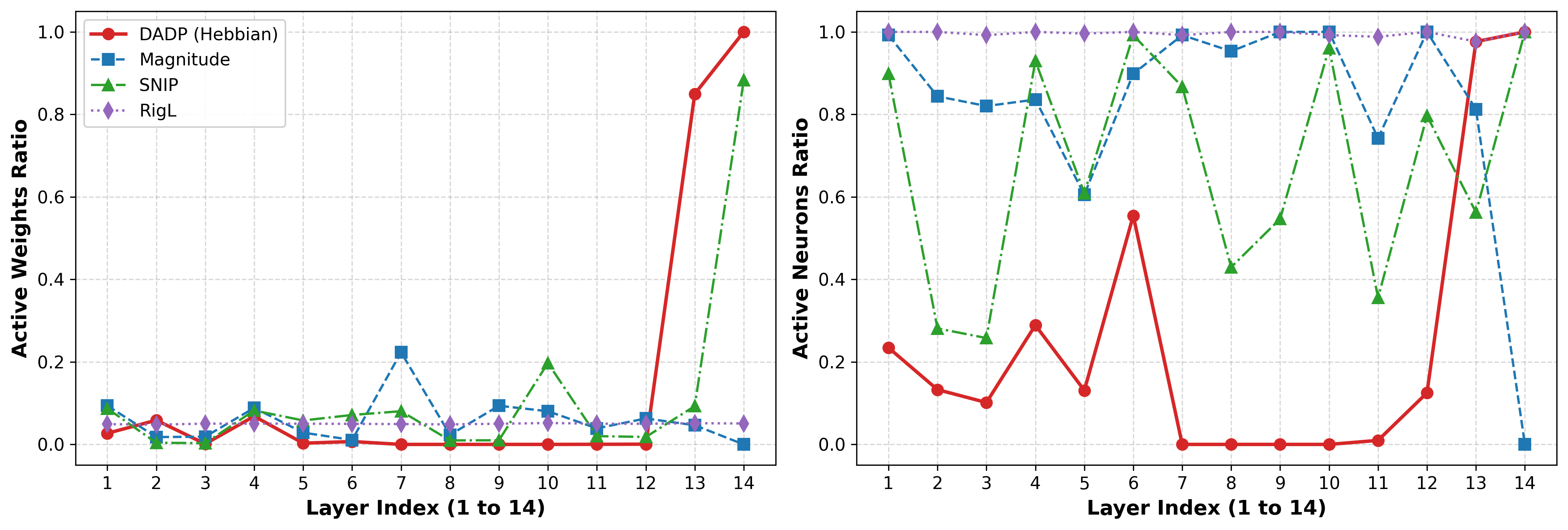}
        \caption{Language Model (MiniBERT on SST-2 @ $95\%$ Sparsity): Inverse Decision-Head Preservation}
        \label{fig:bert_layer_wise}
    \end{subfigure}
        \caption{Layer-wise sparsity allocation across diverse architectural paradigms. Layer indices on the x-axis ($1, 2, 3, \dots, L$) correspond to sequential layer depths from input to output (full layer name mapping listed in Appendix Table~\ref{tab:app_layer_name_mapping}). (a) \textbf{Vision models} (ResNet-18) preserve early visual feature extractors ($0\%$ sparsity) while compressing deep representations. (b) \textbf{Sequence \& Language models} (MiniBERT) exhibit inverse allocation, heavily pruning early token projections while preserving dense capacity in deep attention decision heads. VGG-16 layer-wise sparsity profile is provided in Appendix~\ref{sec:appendix_vgg16}.}
    \label{fig:layer_sparsity_grid}
\end{figure}

\paragraph{Foundational Feature Lock in Vision Architectures.}
In CNN backbones (VGG-16 and ResNet-18), DADP preserves early visual layers without pruning. As shown in Figure~\ref{fig:layer_sparsity_grid}a, the initial convolutional stages (Layers 1–3 in ResNet-18) remain entirely dense. Because these early spatial filters process raw pixels to extract low-level features (edges, color, and texture), their gradient-activation products $I_{ij}^l$ stay consistently large; pruning them would degrade downstream representations. In contrast, deeper convolutional blocks exhibit substantial redundancy on 10-class tasks like CIFAR-10, allowing DADP to prune deep feature maps up to $98\%$--$99\%$ sparsity without compromising performance.

\paragraph{Inverse Allocation in Sequence and Language Paradigms.}
In contrast, sequence models exhibit an inverted sparsity distribution, preserving capacity primarily in deeper layers (Figure~\ref{fig:layer_sparsity_grid}b). In MiniBERT, early embedding layers and intermediate attention projections are pruned aggressively up to $95\%$ sparsity, while recurrent state transitions (\texttt{fc\_hh}) in BiLSTM-CRF reach $92.5\%$--$94.2\%$ sparsity, reflecting heavy redundancy in early, context-free projections. Conversely, deeper attention blocks, feedforward networks, and final classification logit heads remain dense ($75\%$--$85\%$ active parameters). Because gradients originate from the task-specific loss, higher backpropagated sensitivity concentrates in these final decision layers. Rather than enforcing a rigid heuristic across architectures, DADP naturally adapts to the information flow of each domain; retaining early representations in vision networks while preserving late contextual representations in sequence models.

\subsection{Information-Theoretic Representation Quality}
\label{sec:information_quality}

A key risk in extreme network sparsification is representation collapse, where intermediate activations lose diversity and degenerate into low-dimensional subspaces. To measure feature diversity across depths without relying on downstream linear probes, we evaluate latent layer activations using matrix-based Shannon entropy ($S_1$) and Effective Rank ($\text{EffRank}$).

As shown in Figure~\ref{fig:resnet_entropy}, DADP retains representational fidelity on par with uncompressed dense baselines across all layer depths, even at sparsities exceeding $97\%$ ($S_{\text{norm}} > 0.85$ throughout ResNet-18). Notably, in mid-depth convolutional blocks, DADP subnetworks yield a higher Effective Rank than the dense baseline. Even at a heavy $99\%$ sparsity with the tighter threshold ($\tau = 0.0005$, brown curve), where accuracy drops by only $3\%$, the plots reveal an interesting quirk: the early layers actually show a increased in normalized entropy and effective rank. This happens because DADP naturally gives more importance into the early layers, protecting the residual skip connections from collapsing when the rest of the network gets squeezed. Because the importance metric $I_{ij}^l$ combines forward activations with backpropagated error, DADP prioritizes decorrelated pathways while eliminating redundant connections that contribute little to loss reduction. In this sense, activity-dependent pruning functions as an implicit regularizer. DADP filters out correlated noise while preserving the geometric rank of the representation space.

\begin{figure}[ht]
\centering
\includegraphics[width=\linewidth]{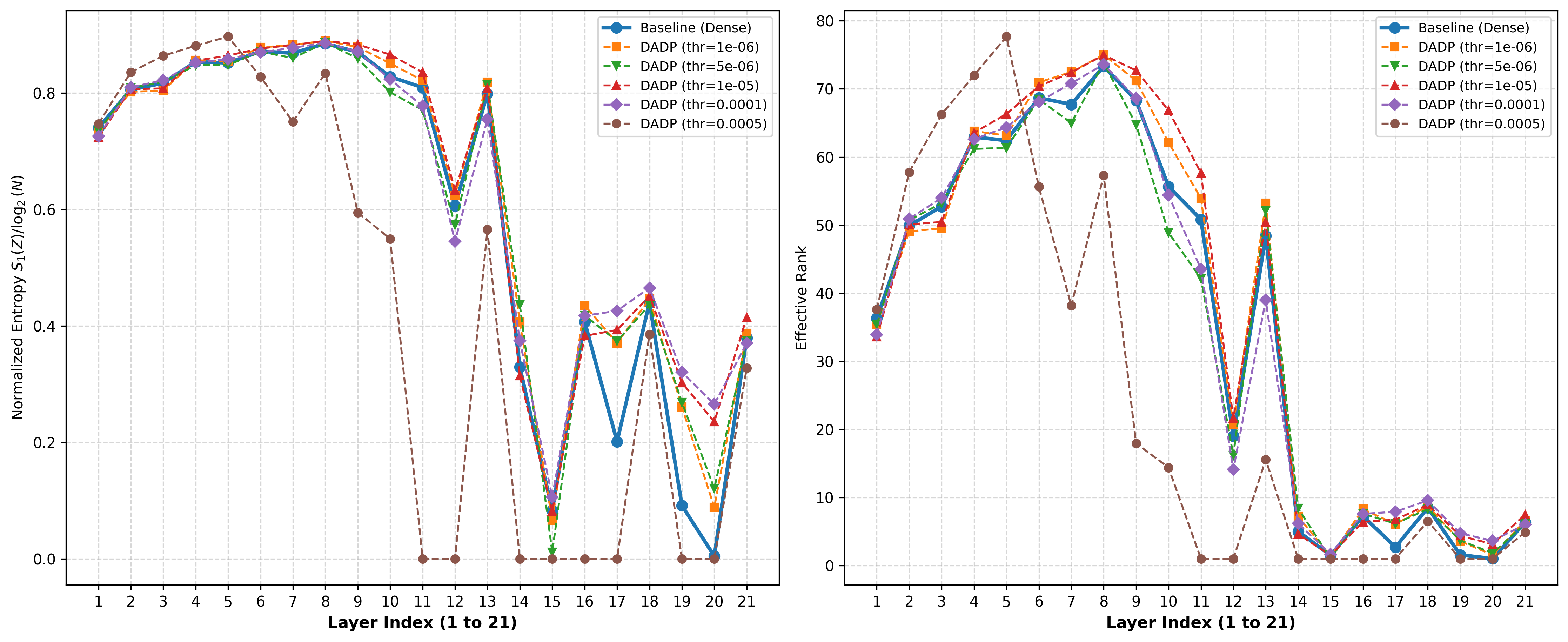}
\caption{Information-theoretic representation quality across layer depths for ResNet-18 on CIFAR-10. DADP maintains high normalized entropy ($S_{\text{norm}} > 0.85$) and matches or exceeds dense baseline effective rank, confirming that activity-dependent pruning eliminates correlated parameter noise without collapsing manifold geometry (VGG-16 plot in Appendix~\ref{sec:appendix_vgg16}).}
\label{fig:resnet_entropy}
\end{figure}

\subsection{Model-Wide Active Weight Distribution Spectrum}
\label{sec:weight_distribution}

We analyze active weight distributions in ResNet-18 at $99\%$ sparsity to see how DADP affects parameter values compared to value-based pruning (Figure~\ref{fig:weight_distributions}). Magnitude-based methods and RigL remove parameters around zero, splitting surviving weights into two separate modes. In contrast, DADP keeps a continuous, zero-centered bell shape across all layers. Uncompressed weights trained with weight decay naturally follow bell-shaped distributions \citep{bishop2006pattern, han2016deep} and \citep{banner2018post}. Because DADP tracks activity ($I_{ij}^l = \mathbb{E}[|a_i^l \cdot \frac{\partial L}{\partial y_j^l}|]$) rather than raw weight size, it keeps small weights with high gradients and prunes large weights that stay inactive ($a_i \approx 0$). This allows the sparse network to retain its original weight geometry without an artificial gap at zero.

\begin{figure}[ht]
\centering
\includegraphics[width=\linewidth]{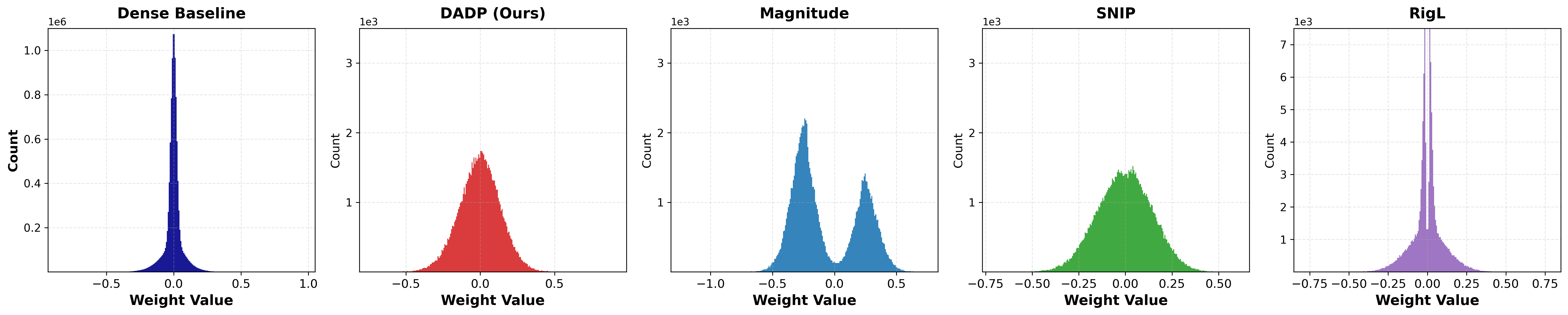}
\caption{Global active weight value distributions across all five evaluation methods on ResNet-18 at $99\%$ sparsity displayed in a single row. Classical post-hoc Magnitude Pruning and dynamic regrowth methods (RigL) introduce sharp bimodal exclusion gaps around zero, whereas DADP retains a smooth, continuous Gaussian spectrum.}
\label{fig:weight_distributions}
\end{figure}

\subsection{Initialization Invariance \& Sensitivity Ablation}
\label{sec:init_ablation}

We evaluate the hyperparameter sensitivity of DADP across seven distinct weight initialization protocols on VGG-16 ($\tau = 6\times 10^{-6}$) and ResNet-18 ($\tau = 1\times 10^{-4}$) trained on CIFAR-10. Across all variance-preserving schemes (Table~\ref{tab:init_ablation}), DADP consistently converges to narrow sparsity and accuracy ranges on both ResNet-18 ($95.54\% \pm 0.16\%$ sparsity, $76.29\% \pm 0.76\%$ accuracy) and VGG-16 ($85.91\% \pm 0.94\%$ sparsity, $85.19\% \pm 0.38\%$ accuracy). This shows that the accumulated activity score adapts naturally to forward-backward scale changes, so the pruning threshold $\tau$ does not need retuning for standard initializations. However, fixed unscaled initializations reveal key architectural differences. With a small unscaled variance ($\sigma = 0.02$), VGG-16 completely collapses ($100\%$ sparsity, $10\%$ accuracy) because forward activations $a_i$ and gradients $\frac{\partial L}{\partial y_j}$ decay across its 16 sequential layers, dropping every connection below $\tau$ at Step 0. ResNet-18 avoids this failure entirely ($95.96\%$ sparsity, $77.26\%$ accuracy), as residual skip connections preserve signal propagation and buffer the network against poor initial scaling.

\begin{table}[ht]
\caption{Sensitivity ablation across weight initialization protocols under fixed pruning thresholds $\tau$. DADP achieves tight performance and sparsity invariance across all standard variance-preserving schemes.}
\label{tab:init_ablation}
\centering
\small
\resizebox{\columnwidth}{!}{%
\begin{tabular}{lcccc}
\toprule
& \multicolumn{2}{c}{\textbf{ResNet-18} (CIFAR-10)} & \multicolumn{2}{c}{\textbf{VGG-16} (CIFAR-10)} \\
\cmidrule(r){2-3} \cmidrule(l){4-5}
\textbf{Initialization Scheme} & \textbf{Sparsity (\%)} & \textbf{Test Acc (\%)} & \textbf{Sparsity (\%)} & \textbf{Test Acc (\%)} \\
\midrule
Kaiming Normal & $95.41\%$ & $75.10\%$ & $84.86\%$ & $84.80\%$ \\
Kaiming Uniform & $95.35\%$ & $76.08\%$ & $85.03\%$ & $85.30\%$ \\
Xavier Normal & $95.66\%$ & $77.05\%$ & $86.58\%$ & $84.84\%$ \\
Xavier Uniform & $95.54\%$ & $76.72\%$ & $86.73\%$ & $85.38\%$ \\
Orthogonal & $95.75\%$ & $76.51\%$ & $86.61\%$ & $85.63\%$ \\
\cmidrule{1-5}
\textbf{Variance-Scaling Mean $\pm$ Std} & \textbf{$95.54\% \pm 0.16\%$} & \textbf{$76.29\% \pm 0.76\%$} & \textbf{$85.91\% \pm 0.94\%$} & \textbf{$85.19\% \pm 0.38\%$} \\
\midrule
Normal ($\sigma = 0.02$, Control) & $95.96\%$ & $77.26\%$ & \textbf{$100.00\%$} & \textbf{$10.00\%$ (Collapsed)} \\
Normal ($\sigma = 0.10$, Control) & $94.67\%$ & $74.42\%$ & $77.68\%$ & $83.44\%$ \\
\bottomrule
\end{tabular}%
}
\end{table}

\subsection{Pruning Interval \& Optimization Frequency Ablation}
\label{sec:ablation_interval}

We evaluate how the pruning interval $\Delta t$ affects DADP by testing update steps $\Delta t \in \{100, 250, 500, 1000, 1500, 2000, 5000\}$ across three models: ResNet-18 ($\tau = 1\times 10^{-5}$) and VGG-16 ($\tau = 5\times 10^{-6}$) on CIFAR-10, and MiniBERT ($\tau = 2\times 10^{-6}$) on SST-2. All runs use fixed thresholds $\tau$ and report mean and standard deviation over three random seeds ($42, 512, 1729$). Table~\ref{tab:interval_ablation} reports the resulting sparsities and test accuracies.

Vision models (ResNet-18 and VGG-16) remain stable across the entire 50-fold range ($\Delta t = 100$ to $5000$). ResNet-18 maintains between $76.12\%$ and $76.92\%$ accuracy while reaching $90.71\%$--$92.11\%$ sparsity. Similarly, VGG-16 stays within $83.36\%$--$84.82\%$ accuracy, with more frequent pruning yielding slightly higher final sparsity ($93.30\%$ at $\Delta t = 100$ vs.\ $88.35\%$ at $\Delta t = 5000$). In contrast, MiniBERT is more sensitive to update frequency. Pruning too often ($\Delta t = 100$, over 10 times per epoch) leads to severe degradation ($51.91\%$ accuracy, $97.11\%$ sparsity), as attention weights are cut before the network can recover. At moderate intervals ($\Delta t \ge 500$, at most 2 prunes per epoch), the model has enough training steps between cuts to stabilize representations, consistently reaching $80.35\%$--$81.19\%$ accuracy. We therefore use $\Delta t = 500$ as a practical default across modalities.

\begin{table}[ht]
\caption{Ablation study on the effect of pruning interval ($\Delta t$) across architectures (mean $\pm$ std across 3 random seeds). Vision architectures exhibit strong stability across all frequencies, whereas transformer models require moderate intervals ($\Delta t \ge 500$) to prevent rapid capacity collapse.}
\label{tab:interval_ablation}
\centering
\small
\resizebox{\columnwidth}{!}{%
\begin{tabular}{lcccccc}
\toprule
& \multicolumn{2}{c}{\textbf{ResNet-18} (CIFAR-10)} & \multicolumn{2}{c}{\textbf{VGG-16} (CIFAR-10)} & \multicolumn{2}{c}{\textbf{MiniBERT} (SST-2)} \\
\cmidrule(r){2-3} \cmidrule(lr){4-5} \cmidrule(l){6-7}
\textbf{Prune Interval ($\Delta t$)} & \textbf{Sparsity (\%)} & \textbf{Test Acc (\%)} & \textbf{Sparsity (\%)} & \textbf{Test Acc (\%)} & \textbf{Sparsity (\%)} & \textbf{Test Acc (\%)} \\
\midrule
$\Delta t = 100$ & $92.11 \pm 0.15\%$ & $76.12 \pm 0.40\%$ & $93.30 \pm 0.18\%$ & $84.82 \pm 0.20\%$ & $97.11 \pm 0.12\%$ & $51.91 \pm 1.41\%$ \\
$\Delta t = 250$ & $91.64 \pm 0.03\%$ & $76.92 \pm 0.52\%$ & $91.58 \pm 0.46\%$ & $83.36 \pm 0.63\%$ & $96.58 \pm 0.50\%$ & $72.06 \pm 13.41\%$ \\
$\Delta t = 500$ (Default) & $91.41 \pm 0.08\%$ & $76.76 \pm 0.16\%$ & $90.42 \pm 0.93\%$ & $84.20 \pm 0.73\%$ & $90.07 \pm 2.99\%$ & $80.35 \pm 0.69\%$ \\
$\Delta t = 1000$ & $91.13 \pm 0.04\%$ & $76.21 \pm 0.40\%$ & $89.62 \pm 0.72\%$ & $84.02 \pm 2.11\%$ & $85.67 \pm 1.01\%$ & $80.70 \pm 0.19\%$ \\
$\Delta t = 1500$ & $90.97 \pm 0.05\%$ & $76.84 \pm 0.29\%$ & $89.37 \pm 0.80\%$ & $84.53 \pm 0.35\%$ & $84.65 \pm 0.73\%$ & $80.85 \pm 0.41\%$ \\
$\Delta t = 2000$ & $90.71 \pm 0.01\%$ & $76.80 \pm 0.12\%$ & $88.84 \pm 0.75\%$ & $84.50 \pm 0.87\%$ & $80.77 \pm 3.70\%$ & $80.73 \pm 0.28\%$ \\
$\Delta t = 5000$ & $90.87 \pm 0.03\%$ & $76.86 \pm 0.40\%$ & $88.35 \pm 0.64\%$ & $83.62 \pm 0.64\%$ & $74.91 \pm 4.41\%$ & $81.19 \pm 0.09\%$ \\
\bottomrule
\end{tabular}%

}
\vspace{0.3cm}
\end{table}

\section{Discussion}Across five architectures and four benchmarks, DADP matches or exceeds static and dynamic baselines at high sparsities (e.g., $72.28\%$ accuracy at $99.35\%$ sparsity on ResNet-18 and $80.85\%$ accuracy at $96\%$ sparsity on MiniBERT). Rather than enforcing rigid layer budgets, a single global threshold $\tau$ allocates parameters according to network information flow. This allows vision models to preserve early convolutional filters while heavily compressing deep layers, whereas sequence models retain late classification heads. Moreover, tracking gradient-activation products maintains unimodal, zero-centered weight distributions and high effective rank, effectively avoiding both the bimodal cutoffs seen in magnitude pruning and latent representation collapse.

Our ablations confirm that DADP remains invariant across standard variance-preserving initializations (Kaiming, Xavier, Orthogonal) and stable across a $50\times$ range of pruning intervals ($\Delta t \in [100, 5000]$) in vision backbones. Still, a practical limitation is identifying a valid
threshold $\tau$: setting it too aggressively can trigger network collapse. This is especially true for sequence models like MiniBERT when pruned too frequently ($\Delta t = 100$) or for sequential architectures like VGG-16 under unscaled initializations. Tracking $I_{ij}^l$ also introduces transient memory overhead during the accumulation window. Due to compute limits, our evaluation focused on compact backbones and standard benchmarks (CIFAR-10, SST-2). Scaling DADP to deeper models (ResNet-50, mBERT) on larger datasets like TinyImageNet and developing structured block-pruning for hardware speedups remains an important direction for future work.

\subsection*{AI use statement}

In this work, we used generative AI tools for writing assistance, code optimization, formatting LaTeX tables and writing mathematical proofs. We have not used generative AI tools for formulating the underlying scientific hypotheses, conceptualizing the core Hebbian activity-dependent pruning mechanism, or generating synthetic or artificial experimental data. All empirical results reflect actual training runs on standard benchmarks.

We have reviewed and verified all AI-assisted work. Specifically, all code was tested, debugged, and validated against baseline implementations. We take responsibility for the final content of this work, including text, claims, and artifacts produced with the aid of generative AI.

\subsection*{Reproducibility statement}

To ensure complete reproducibility of all theoretical and empirical results, we have organized the relevant documentation across the manuscript, appendix, and supplementary materials:
\begin{itemize}
    \item \textbf{Theoretical Proofs:} Formal derivations, assumptions, and complete proofs for Theorem~1 and Theorem~2 are provided in Appendix~\ref{sec:appendix_proofs}.
    \item \textbf{Architectures and Hyperparameters:} Detailed layer specifications, training epochs, learning rates, batch sizes, pruning intervals ($\Delta T$), and threshold sweeps ($\tau$) are detailed in Section~3.3 and summarized in Table~\ref{tab:app_hyperparameters} (Appendix~\ref{sec:appendix_architectures}).
    \item \textbf{Baseline Adaptation Details:} Implementation specifics and training schedules for Magnitude Pruning, SNIP, and RigL under matching compute budgets are described in Appendix~\ref{sec:appendix_baselines}.
    \item \textbf{Statistical Significance:} Full empirical benchmark tables reporting mean $\pm$ standard deviation across multiple random seeds are provided in Appendix~\ref{sec:appendix_full_tables}.
    \item \textbf{Code and Implementation:} Anonymized source code containing the core DADP pruning algorithm, model architectures, and evaluation scripts is included directly in the supplementary package submitted alongside this manuscript. Furthermore, the complete, clean codebase with end-to-end training and benchmark evaluation pipelines will be open-sourced upon publication.
\end{itemize}

\subsection*{Ethics statement}

This work introduces an algorithmic method for deep neural network compression. All datasets used in this study (MNIST, CIFAR-10, CoNLL-2003, and SST-2) are standard, publicly available academic benchmarks. This research does not involve human subjects, personal data privacy concerns, or potential dual-use biological or security harms. By lowering computational and energy demands during both training and inference, network pruning supports more sustainable and environmentally accessible machine learning.

\subsubsection*{Acknowledgments}
We gratefully acknowledge Google Colab and Kaggle for providing free cloud GPU compute resources (NVIDIA T4), which enabled the execution of all training, pruning, and empirical evaluation experiments presented in this work.

\bibliography{iclr2027_conference}
\bibliographystyle{iclr2027_conference}

\clearpage
\appendix
\section{Mathematical Proofs}
\label{sec:appendix_proofs}

In this section, we present the formal derivations and proofs for Theorem~1 and Theorem~2.

\subsection{Proof of Theorem 1: First-Order Taylor Expansion of Loss Sensitivity}

\begin{theorem*}
The activation-derivative metric $I_{ij}^l = \left| a_i^{l-1} \cdot \frac{\partial \mathcal{L}}{\partial y_j^l} \right|$ is equivalent to the magnitude of the first-order Taylor expansion of loss change $\Delta \mathcal{L}$ resulting from setting connection $w_{ij}^l$ to zero, normalized per unit weight magnitude.
\end{theorem*}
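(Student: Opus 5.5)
The argument is a direct computation. We express the loss change from zeroing $w_{ij}^l$ as a first-order expansion in the perturbation $\delta w_{ij}^l = -w_{ij}^l$, and then use the chain rule to rewrite the weight gradient as the activation--derivative product.

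\textbf{Setup.} Fix a single input (or minibatch) and treat $\mathcal{L}$ as a differentiable function of $w_{ij}^l$ with all other parameters held fixed. Since $y_j^l = \sum_k w_{kj}^l a_k^{l-1} + b_j^l$ and $a_i^{l-1}$ does not depend on $w_{ij}^l$, we have $\partial y_j^l/\partial w_{ij}^l = a_i^{l-1}$. The weight $w_{ij}^l$ affects $\mathcal{L}$ only through $y_j^l$; in the convolutional or recurrent case we sum over the positions or time steps where the weight is shared. The chain rule then gives
\[
\frac{\partial \mathcal{L}}{\partial w_{ij}^l} = a_i^{l-1}\,\frac{\partial \mathcal{L}}{\partial y_j^l}.
\]

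\textbf{Taylor step.} Setting the weight to zero is the perturbation $\delta = -w_{ij}^l$. Expanding to first order,
\[
\Delta\mathcal{L} = \mathcal{L}(w_{ij}^l=0) - \mathcal{L}(w_{ij}^l) = -w_{ij}^l\,\frac{\partial \mathcal{L}}{\partial w_{ij}^l} + O\big((w_{ij}^l)^2\big),
\]
with the remainder controlled by the second derivative along that coordinate; this needs local $C^2$ smoothness or Lipschitz gradients. Taking magnitudes, the first-order term is $|w_{ij}^l|\,|a_i^{l-1}\,\partial\mathcal{L}/\partial y_j^l|$. Dividing by $|w_{ij}^l|$ ("normalized per unit weight magnitude") leaves exactly $|a_i^{l-1}\,\partial\mathcal{L}/\partial y_j^l|$, which is the claimed identity for the per-sample metric. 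For the windowed version in Theorem~\ref{thm:taylor}, we take the expectation over the $\Delta T$ steps on both sides. By linearity this gives $I_{ij}^l = \mathbb{E}\big[|\Delta\mathcal{L}^{(1)}|\big]/|w_{ij}^l|$. This step holds exactly only if $w_{ij}^l$ is treated as fixed across the window; otherwise it is the expectation of the normalized per-step first-order change.

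\textbf{Main obstacle.} The real difficulty is interpreting the statement precisely, not doing the calculation. Three points need care.
\begin{itemize}
\item \emph{Meaning of "equivalent".} The result is an identity for the first-order term only, not for the true $\Delta\mathcal{L}$. We will state it that way and note that the remainder is $O(|w_{ij}^l|^2)$.
\item \emph{Zero weights.} Normalization is ill-defined when $w_{ij}^l=0$. In that case we define the normalized quantity as the directional derivative $|\partial\mathcal{L}/\partial w_{ij}^l|$, which coincides with the limit of the ratio.
\item \emph{Shared weights.} For weights shared across positions, the product becomes a sum over positions inside the absolute value. The metric should then be read with that aggregated form.
\end{itemize}
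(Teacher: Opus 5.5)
Your proposal is correct and follows essentially the same route as the paper's proof: write zeroing as the perturbation $\Delta w_{ij}^l = -w_{ij}^l$, take the first-order Taylor term $-w_{ij}^l\,\partial\mathcal{L}/\partial w_{ij}^l$, use the chain rule with $\partial y_j^l/\partial w_{ij}^l = a_i^{l-1}$, and divide by $|w_{ij}^l|$. Your added caveats go beyond what the paper makes explicit but are consistent with it: the $O((w_{ij}^l)^2)$ remainder, the convention for $w_{ij}^l = 0$, aggregation over shared positions, and holding $w_{ij}^l$ fixed across the window for the expectation version.
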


\begin{proof}
Let $\mathcal{L}(\Theta)$ be a twice-differentiable loss function over parameters $\Theta$. Consider an active connection $w_{ij}^l \in \mathbb{R}$ linking input neuron $i$ at layer $l-1$ with pre-activation $y_j^l = \sum_k w_{kj}^l a_k^{l-1} + b_j^l$ at layer $l$.

Pruning weight $w_{ij}^l$ to zero corresponds to setting $w_{ij}^{l, \text{pruned}} = 0$, which induces a weight perturbation $\Delta w_{ij}^l = 0 - w_{ij}^l = -w_{ij}^l$.

Expanding the perturbation of loss $\mathcal{L}$ around the current weight value $w_{ij}^l$ using a first-order Taylor series yields:
\begin{equation}
\label{eq:app_taylor_pert}
\Delta \mathcal{L} = \mathcal{L}(w_{ij}^l + \Delta w_{ij}^l) - \mathcal{L}(w_{ij}^l) \approx \frac{\partial \mathcal{L}}{\partial w_{ij}^l} \Delta w_{ij}^l = -\frac{\partial \mathcal{L}}{\partial w_{ij}^l} w_{ij}^l
\end{equation}

By the multivariable chain rule, the partial derivative of loss with respect to weight $w_{ij}^l$ decomposes as:
\begin{equation}
\label{eq:app_chain_rule}
\frac{\partial \mathcal{L}}{\partial w_{ij}^l} = \frac{\partial \mathcal{L}}{\partial y_j^l} \cdot \frac{\partial y_j^l}{\partial w_{ij}^l}
\end{equation}

Since $y_j^l = w_{ij}^l a_i^{l-1} + \sum_{k \neq i} w_{kj}^l a_k^{l-1} + b_j^l$, the local partial derivative is:
\begin{equation}
\label{eq:app_local_deriv}
\frac{\partial y_j^l}{\partial w_{ij}^l} = a_i^{l-1}
\end{equation}

Substituting Equation~\eqref{eq:app_local_deriv} into Equation~\eqref{eq:app_chain_rule} gives:
\begin{equation}
\label{eq:app_weight_grad}
\frac{\partial \mathcal{L}}{\partial w_{ij}^l} = \frac{\partial \mathcal{L}}{\partial y_j^l} \cdot a_i^{l-1}
\end{equation}

Substituting Equation~\eqref{eq:app_weight_grad} back into the Taylor expansion in Equation~\eqref{eq:app_taylor_pert} produces:
\begin{equation}
\Delta \mathcal{L} \approx - w_{ij}^l \left( a_i^{l-1} \cdot \frac{\partial \mathcal{L}}{\partial y_j^l} \right)
\end{equation}

Dividing by the absolute magnitude of active weight $|w_{ij}^l|$ to obtain the scale-invariant loss sensitivity per unit weight magnitude gives:
\begin{equation}
\left| \frac{\Delta \mathcal{L}}{w_{ij}^l} \right| \approx \left| a_i^{l-1} \cdot \frac{\partial \mathcal{L}}{\partial y_j^l} \right| = I_{ij}^l
\end{equation}

This completes the proof. $\blacksquare$
\end{proof}

\subsection{Analysis of Theorem 2: Training-Driven Sparsity Equilibrium \& Generalization Capacity Boundary}

\begin{theorem*}
Given a global threshold $\tau$, DADP importance scores $I_{ij}^l = \mathbb{E}[|a_i^{l-1} \cdot \frac{\partial \mathcal{L}_{\text{train}}}{\partial y_j^l}|]$ stabilize surviving pathways above $\tau$, causing the structural pruning rate to asymptote ($\Delta S \to 0$) into an organic training sparsity equilibrium $S^*(\tau) < 100\%$. However, because $I_{ij}^l$ is computed on training set loss sensitivity, setting $\tau$ near the critical capacity boundary $\tau_{\text{crit}}$ preserves high training accuracy while causing test loss to increase and test generalization to degrade.
\end{theorem*}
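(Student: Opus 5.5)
The plan is to split the statement into three claims and prove each under explicit assumptions, since as written it is partly qualitative: (i) finite-time stabilization of the mask, $\Delta S \to 0$; (ii) nontriviality of the equilibrium, $S^*(\tau) < 100\%$, for $\tau$ below a critical value; (iii) the train/test asymmetry near $\tau_{\text{crit}}$.

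\textbf{Claim (i).} This should follow purely combinatorially from the update rule $M_{ij}^l(t) = M_{ij}^l(t-\Delta T)\,\mathbb{I}(I_{ij}^l(t) \ge \tau)$. The masks are entrywise non-increasing, and they live in the finite lattice $\{0,1\}^{\sum_l |W^l|}$. Hence the number of active connections is a non-increasing integer sequence bounded below by $0$. Such a sequence changes value only finitely many times, so it is eventually constant, and the pruning rate $\Delta S$ is eventually identically zero. This step needs no assumption on the optimizer. The real content of the theorem is therefore that the limit is not the empty mask.

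\textbf{Claim (ii).} I would define
\[
\tau_{\text{crit}} = \sup\{\tau : \text{the stabilized mask } M^*(\tau) \neq 0\},
\]
and show that $\tau_{\text{crit}} > 0$ under two conditions: the loss has a positive lower bound on the gradient norm away from its minimizers over any fixed support, and the optimizer does not drive the training loss to exact zero. The argument formalizes the negative-feedback loop of Section~\ref{sec:dynamics}.
\begin{itemize}
\item Suppose a pruning event leaves only a small surviving set $A$.
\item If the restricted network is not at a stationary point, then $\|\nabla_{W_A}\mathcal{L}\| \ge g_0 > 0$.
\item By the chain-rule identity $\partial\mathcal{L}/\partial w_{ij}^l = a_i^{l-1}\,\partial\mathcal{L}/\partial y_j^l$ from the proof of Theorem~\ref{thm:taylor}, some $I_{ij}^l \ge g_0/\sqrt{|A|}$.
\item If $\tau$ is below this level, at least one connection survives each window.
\end{itemize}
A connected input-to-output path must survive, because dead neurons zero out both factors, as in Section~3.2.1. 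To make this work I will need to average the gradient over the window $\Delta T$ and assume bounded gradient variance, so that the expectation, and not a single batch, exceeds $\tau$.

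\textbf{Claim (iii), the main obstacle.} This is the generalization part, because "test generalization degrades" is not a theorem without a model of the train/test gap. I would first write, for each weight,
\[
\frac{\partial\mathcal{L}_{\text{test}}}{\partial w_{ij}^l} = \frac{\partial\mathcal{L}_{\text{train}}}{\partial w_{ij}^l} + \delta_{ij}^l,
\]
where $\delta_{ij}^l$ is the generalization-gap gradient. By Theorem~\ref{thm:taylor}, the first-order change in test loss from pruning a set $P$ is
\[
\Delta\mathcal{L}_{\text{test}} \approx \sum_{(i,j,l)\in P} |w_{ij}^l|\,\bigl(\text{test importance of } w_{ij}^l\bigr),
\]
up to sign. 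The corresponding training change is bounded by $\tau\sum_P |w_{ij}^l|$. As $\tau \uparrow \tau_{\text{crit}}$, $P$ grows to include connections whose training importance is small but whose $\delta_{ij}^l$ need not be. This gives training loss change $O(\tau\|w_P\|_1)$ while the test loss change is at least $\sum_P |w_{ij}^l|\,|\delta_{ij}^l| - \tau\|w_P\|_1$.

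Proving an actual increase requires a lower bound of the form $\mathbb{E}|\delta_{ij}^l| \ge c > 0$ on the pruned set. I would first try to obtain it from a finite-sample argument: the window average of training gradients concentrates around its empirical mean, not the population mean. Failing that, I would state it as an assumption, presenting this part as an analysis rather than a full proof.
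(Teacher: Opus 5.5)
Your claim (i) is correct and sharper than the paper's treatment. Because pruning is permanent, $\Delta S$ is eventually zero for free, so all the content lies in $S^*(\tau)<100\%$. That claim really does need $\tau$ below a critical value: the paper's own tables show $100\%$ sparsity at $\tau=10^{-4}$ on VGG-16 and at $3\times10^{-4}$ on BiLSTM-CRF. The paper argues that part by contradiction: collapse would force $\mathcal{L}_{\text{train}}\to\infty$, and the resulting large backpropagated gradients would keep $I_{ij}^l\ge\tau$ on essential connections. Do not borrow this route. Under softmax cross-entropy the output error $p-e_y$ (softmax minus one-hot) is bounded, and a bias-only network has finite loss. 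The paper's generalization half is purely empirical, resting on the MLP 100-epoch numbers.

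\textbf{The gap in your (ii).} The bound $\max_{(i,j,l)\in A}I_{ij}^l\ge g_0/\sqrt{|A|}$ needs a time-uniform lower bound on the gradient of the restricted network. Training drives the restricted network toward stationarity, so no such bound is available. If the surviving support cannot interpolate, the iterates approach a restricted minimizer with positive loss. There, ``not at a stationary point'' fails, and the bound $I_{ij}^l\ge|\bar g_{ij}^l|$ gives nothing. Here $\bar g_{ij}^l$ is the window-averaged batch gradient, and that inequality already holds by the triangle inequality, with no variance control. Neither of your two assumptions excludes this regime.

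Moreover, ``one survivor per event'' does not propagate. Scores at event $t$ are computed under the old mask and all entries are thresholded simultaneously. So a connection can survive while everything downstream of its post-synaptic neuron is pruned in the same event. In the next window it is orphaned and the loss is constant in $W_A$. Every point is then stationary, so your gradient assumption is vacuous, and the next event removes it. The sentence ``a connected input-to-output path must survive'' is exactly what needs proving; it does not follow from the dead-neuron remark.

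A workable lever points the opposite way from your bounded-variance assumption. Since $I_{ij}^l$ averages absolute per-batch products, near a non-interpolating stationary point it is kept positive by batch-to-batch dispersion, because per-sample output errors are bounded below. Even then you must show that a single event cannot cut every live path. Also, defining $\tau_{\text{crit}}$ as a supremum gives no conclusion for all $\tau<\tau_{\text{crit}}$ unless $M^*(\tau)$ is monotone in $\tau$. That monotonicity cannot be assumed, since the trajectory itself depends on $\tau$.

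\textbf{The gap in your (iii).} The displayed lower bound on the test-loss change does not follow, even granting $\mathbb{E}|\delta_{ij}^l|\ge c$. Theorem~\ref{thm:taylor} is a per-weight statement. Summed over a pruned set $P$ it gives the signed quantity $\Delta\mathcal{L}_{\text{test}}\approx-\sum_{P}w_{ij}^l\bigl(\partial\mathcal{L}_{\text{train}}/\partial w_{ij}^l+\delta_{ij}^l\bigr)$. The term $\bigl|\sum_P w_{ij}^l\delta_{ij}^l\bigr|$ can be far smaller than $\sum_P|w_{ij}^l|\,|\delta_{ij}^l|$ through cancellation. It can also have the wrong sign: shrinking overfit weights toward zero can lower test loss, as weight decay does. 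You would need a signed coherence assumption, for instance $-\sum_P w_{ij}^l\delta_{ij}^l\ge c\,\|w_P\|_1$, not a bound on $|\delta_{ij}^l|$. The paper does not prove this half either, so presenting it as analysis under an explicit assumption is the right status. But with the unsigned assumption the inequality you wrote, and hence ``test loss increases,'' does not follow.
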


\begin{proof}
Let $\mathcal{P}_{\text{train}}^* = \{(i, j, l) \mid \text{connection } w_{ij}^l \text{ is essential for training loss minimization}\}$ denote the active subnetwork required to fit the training distribution.

Assume for contradiction that as training proceeds ($t \to \infty$), global sparsity reaches total structural collapse $S(t) \to 100\%$, implying that all connections $(i, j, l) \in \mathcal{P}_{\text{train}}^*$ are pruned because $I_{ij}^l(t) < \tau$.

If all essential training connections in $\mathcal{P}_{\text{train}}^*$ are pruned, the information flow to output logits collapses, causing training error to spike ($\mathcal{L}_{\text{train}}(t) \to \infty$). 

Under standard gradient backpropagation, as training loss $\mathcal{L}_{\text{train}} \to \infty$, the output error gradient $\frac{\partial \mathcal{L}_{\text{train}}}{\partial y_j^L}$ scales proportionally:
\begin{equation}
\left\| \frac{\partial \mathcal{L}_{\text{train}}}{\partial y^L} \right\| \to \infty
\end{equation}

Backpropagating error gradients through surviving active pathways yields:
\begin{equation}
\frac{\partial \mathcal{L}_{\text{train}}}{\partial y_j^l} = \sum_k \left( \frac{\partial \mathcal{L}_{\text{train}}}{\partial y_k^{l+1}} \cdot w_{jk}^{l+1} \cdot \sigma'(y_j^l) \right)
\end{equation}
The gradient magnitude $\left| \frac{\partial \mathcal{L}_{\text{train}}}{\partial y_j^l} \right|$ increases monotonically. 

For essential active connections $(i, j, l) \in \mathcal{P}_{\text{train}}^*$, the expectation score $I_{ij}^l(t+1)$ satisfies:
\begin{equation}
I_{ij}^l(t+1) = \frac{1}{\Delta T} \sum_{k=0}^{\Delta T - 1} \left| a_i^{l-1} \cdot \frac{\partial \mathcal{L}_{\text{train}}}{\partial y_j^l} \right| \ge \tau
\end{equation}

Whenever $I_{ij}^l(t+1) \ge \tau$, the connection remains active: $M_{ij}^l(t+1) = 1$. This active connection prevents training loss $\mathcal{L}_{\text{train}}$ from diverging, stabilizing $I_{ij}^l$ at a stationary equilibrium score $I^* \ge \tau$ and establishing a structural sparsity asymptote ($\Delta S \to 0$). 

\paragraph{Generalization Capacity Boundary.} Importantly, because the importance score $I_{ij}^l$ depends on training set gradients $\frac{\partial \mathcal{L}_{\text{train}}}{\partial y_j^l}$, the structural equilibrium $S^*(\tau)$ is governed strictly by training set fitting. As the network becomes extremely sparse (e.g., $>98.0\%$ sparsity in 100-epoch limit tests), the surviving active weights fit the training data near perfectly ($99.53\%$ train accuracy), keeping $I_{ij}^l \ge \tau$. However, the loss of representation capacity causes the network to overfit, leading to an increase in test loss ($0.0945 \to 0.2277$) and a slight decay in test accuracy ($97.69\% \to 96.94\%$). Pushing $\tau \ge \tau_{\text{crit}}$ (e.g., $\tau = 1\text{e-}4$) forces the model past this capacity boundary into catastrophic classification collapse ($77.11\%$). $\blacksquare$
\end{proof}

\clearpage
\section{DADP Algorithmic Procedure}
\label{sec:appendix_algorithm}

We provide the complete procedural workflow for Dynamic Activity-Dependent Pruning (DADP) in Algorithm~\ref{alg:dadp}. In DADP, synaptic importance accumulators track the elementwise product of forward activations and backpropagated gradients across each mini-batch. At periodic step intervals $\Delta T$, temporal averages are computed and thresholded against $\tau$ to permanently prune redundant pathways without regrowth.

\begin{algorithm}[H]
\caption{Dynamic Activity-Dependent Pruning (DADP)}
\label{alg:dadp}
\begin{algorithmic}[1]
\REQUIRE Training dataset $\mathcal{D}$, initial weights $W = \{W^l\}_{l=1}^L$, learning rate $\eta$, pruning interval $\Delta T$, global threshold $\tau$, total steps $T$.
\ENSURE Sparsified model parameters $W \odot M$.
\STATE Initialize masks $M_{ij}^l \leftarrow 1$ for all layers $l$ and connections $(i, j)$
\STATE Initialize importance accumulators $A_{ij}^l \leftarrow 0$
\FOR{step $t = 1$ \TO $T$}
    \STATE Sample mini-batch $(X, y) \sim \mathcal{D}$
    \STATE Forward pass with masked parameters: $y^l = (W^l \odot M^l) a^{l-1} + b^l$
    \STATE Compute loss $\mathcal{L}$ and backpropagate error gradients $\frac{\partial \mathcal{L}}{\partial y_j^l}$
    \STATE Accumulate local sensitivity: $A_{ij}^l \leftarrow A_{ij}^l + \left| a_i^{l-1} \cdot \frac{\partial \mathcal{L}}{\partial y_j^l} \right|$
    \STATE Update weights with optimizer: $W^l \leftarrow \text{OptimizerStep}(W^l, \nabla_{W^l} \mathcal{L})$
    \STATE Enforce mask: $W^l \leftarrow W^l \odot M^l$
    \IF{$t \equiv 0 \pmod{\Delta T}$}
        \STATE Compute temporal average: $I_{ij}^l \leftarrow A_{ij}^l / \Delta T$
        \STATE Update masks: $M_{ij}^l \leftarrow M_{ij}^l \cdot \mathbb{I}(I_{ij}^l \ge \tau)$
        \STATE Permanently prune weights: $W^l \leftarrow W^l \odot M^l$
        \STATE Reset accumulators: $A_{ij}^l \leftarrow 0$
    \ENDIF
\ENDFOR
\RETURN $W \odot M$
\end{algorithmic}
\end{algorithm}

\clearpage
\section{Network Architectures and Training Setup}
\label{sec:appendix_architectures}

We provide detailed structural specifications and training configurations for all five neural network architectures evaluated in this work.

\subsection{Architectural Details}

\begin{enumerate}
    \item \textbf{MLP (MNIST)}: A fully connected feedforward network designed for $28 \times 28$ grayscale digit classification (\texttt{BaselineMLP} / \texttt{HebbianMLP}). It consists of an input flattening layer ($784$ dimensions), followed by two hidden linear layers of dimension $512$ each with ReLU activations (\texttt{fc1}: $784 \to 512$, \texttt{fc2}: $512 \to 512$). The final linear classification head (\texttt{fc3}) maps from $512$ dimensions to $10$ class logits ($669,706$ total parameters).
    \item \textbf{VGG-16 (CIFAR-10)}: A 16-layer convolutional architecture adapted for $32 \times 32 \times 3$ images (\texttt{BaselineVGG16} / \texttt{HebbianVGG16}). The feature extractor comprises $13$ convolutional layers arranged into 5 blocks ($[2, 2, 3, 3, 3]$ conv layers per block with channel depths $64, 128, 256, 512, 512$). Each convolution uses a $3 \times 3$ kernel, stride 1, and padding 1, followed by Batch Normalization and ReLU. Max-pooling ($2 \times 2$, stride 2) is applied after blocks 1, 2, 3, and 4. Spatial feature maps are condensed via \texttt{AdaptiveAvgPool2d((1, 1))}. The classifier consists of three dense layers ($512 \to 512 \to 10$) with dropout ($p=0.5$), yielding $15,253,578$ total parameters.
    \item \textbf{ResNet-18 (CIFAR-10)}: A residual network tailored for CIFAR-10 image classification (\texttt{NativeResNet18} / \texttt{get\_resnet18}). It features an initial convolutional layer ($64$ filters, $7 \times 7$ kernel, stride 2, padding 3) with Batch Normalization, ReLU, and $3 \times 3$ max-pooling, followed by 4 residual stages containing 2 BasicBlocks each ($[64, 128, 256, 512]$ feature channels). Residual downsampling shortcuts utilize $1 \times 1$ convolutions with stride 2 and Batch Normalization. Global Average Pooling condenses spatial maps to $512$ dimensions, followed by a linear projection to $10$ classes ($11,181,642$ total parameters).
    \item \textbf{BiLSTM-CRF (CoNLL-2003)}: A sequence tagging model for Named Entity Recognition (\texttt{BiLSTM\_CRF}). Input tokens are mapped via a trainable $128$-dimensional embedding layer (vocabulary size $5,000$), followed by a 1-layer Bidirectional LSTM with hidden state dimension $h=64$ per direction ($128$-dimensional concatenated output per step). A linear projection (\texttt{hidden2tag}) maps hidden states to emission scores across $9$ entity classes ($128 \to 9$), followed by a Linear-Chain Conditional Random Field (\texttt{CRF}) layer with a $9 \times 9$ learnable transition matrix trained via Negative Log-Likelihood ($740,588$ total parameters).
    \item \textbf{MiniBERT (SST-2)}: HuggingFace's pretrained \texttt{prajjwal1/bert-tiny} architecture fine-tuned for binary sentiment classification. It comprises a vocabulary of $30,522$ tokens ($128$-dim embeddings), $2$ Transformer encoder layers, hidden dimension $d=128$, intermediate feedforward dimension $d_{\text{ff}}=512$, and $2$ self-attention heads per layer (maximum sequence length $128$). A pooled linear classification head maps representations to $2$ sentiment logits ($4,386,178$ total parameters).
\end{enumerate}
\begin{table}[H]
\caption{Hyperparameter configurations and training specifications across all evaluated architectures.}
\label{tab:app_hyperparameters}
\centering
\small
\resizebox{\linewidth}{!}{%
\begin{tabular}{lccccc}
\toprule
\textbf{Hyperparameter} & \textbf{MLP} & \textbf{VGG-16} & \textbf{ResNet-18} & \textbf{BiLSTM-CRF} & \textbf{MiniBERT} \\
\midrule
Dataset & MNIST & CIFAR-10 & CIFAR-10 & CoNLL-2003 & SST-2 \\
Task Modality & Vision (Digits) & Vision (Objects) & Vision (Objects) & Sequence (NER) & NLP (Sentiment) \\
Total Parameters & 667,146 & 14,728,266 & 11,173,962 & 1,514,841 & 4,386,178 \\
Optimizer & Adam & Adam & Adam & Adam & Adam \\
Learning Rate ($\eta$) & $1\text{e-}3$ & $1\text{e-}3$ & $1\text{e-}3$ & $1\text{e-}3$ & $2\text{e-}5$ \\
Weight Decay & $1\text{e-}4$ & $1\text{e-}4$ & $1\text{e-}4$ & $1\text{e-}4$ & $1\text{e-}2$ \\
Batch Size & 64 & 64 & 64 & 64 & 64 \\
Training Epochs & 20 & 20 & 20 & 20 & 20 \\
Loss Function & Cross-Entropy & Cross-Entropy & Cross-Entropy & CRF NLL Loss & Cross-Entropy \\
Prune Interval ($\Delta T$) & 500 steps & 500 steps & 500 steps & 500 steps & 500 steps \\
Threshold Sweep ($\tau$) & $[1\text{e-}6, 1\text{e-}4]$ & $[1\text{e-}6, 1\text{e-}4]$ & $[1\text{e-}6, 5\text{e-}4]$ & $[1\text{e-}6, 3\text{e-}4]$ & $[1\text{e-}6, 5\text{e-}6]$ \\
Precision & FP16 (AMP) & FP16 (AMP) & FP16 (AMP) & FP32 & FP16 (AMP) \\
Hardware Accelerator & NVIDIA T4 & NVIDIA T4 & NVIDIA T4 & NVIDIA T4 & NVIDIA T4 \\
\bottomrule
\end{tabular}%
}
\end{table}

\subsection{Baseline Pruning Implementation Details}
\label{sec:appendix_baselines}

To ensure a fair comparison, all models were trained for a total of 20 epochs under matching computational budgets. We adapted the baseline pruning methods to fit this schedule:

\begin{itemize}
    \item \textbf{Magnitude Pruning:} We train the model as a dense network for 17 epochs. At epoch 17, we prune the smallest weights to reach the target sparsity. We then fine-tune the sparse model for the remaining 3 epochs.
    \item \textbf{SNIP:} We calculate the pruning mask at epoch 0 before training begins using a single batch of data. Once the lowest-scoring weights are pruned, the mask stays fixed for all 20 training epochs.
    \item \textbf{RigL:} We start training from a random sparse mask at epoch 0. Every 100 steps, we drop small weights and regrow connections with large gradients. To keep training fast within our 20-epoch limit, we gradually slow down the regrowth rate over the first 16 epochs using a cosine curve, then freeze the mask for the final 4 epochs for fine-tuning.
\end{itemize}

\clearpage
\section{Full Empirical Benchmark Tables}
\label{sec:appendix_full_tables}

In this section, we provide complete benchmark evaluation tables across all models, datasets, pruning thresholds, target sparsities, and weight initialization strategies. DADP results report mean $\pm$ standard deviation across 3 random seeds ($42, 512, 1729$) from statistical sweeps.

\subsection{MLP on MNIST}

\begin{table}[H]
\caption{Complete empirical evaluation for MLP on MNIST across pruning methods and thresholds (DADP reports mean $\pm$ std across 3 random seeds).}
\label{tab:app_mlp_mnist}
\centering
\small
\resizebox{\linewidth}{!}{%
\begin{tabular}{llccc}
\toprule
\textbf{Method} & \textbf{Configuration / Threshold} & \textbf{Final Sparsity (\%)} & \textbf{Final Test Acc (\%)} & \textbf{Peak Test Acc (\%)} \\
\midrule
Dense Baseline & Unpruned (0\%) & 0.00\% & 98.39\% & 98.44\% \\
\midrule
\textbf{DADP (Ours)} & $\tau = 1\text{e-}6$ & \textbf{84.90 $\pm$ 0.56\%} & \textbf{97.91 $\pm$ 0.12\%} & \textbf{98.18 $\pm$ 0.02\%} \\
\textbf{DADP (Ours)} & $\tau = 5\text{e-}6$ & \textbf{94.60 $\pm$ 0.14\%} & \textbf{97.55 $\pm$ 0.09\%} & \textbf{98.02 $\pm$ 0.03\%} \\
\textbf{DADP (Ours)} & $\tau = 1\text{e-}5$ & \textbf{96.19 $\pm$ 0.11\%} & \textbf{97.38 $\pm$ 0.15\%} & \textbf{97.91 $\pm$ 0.08\%} \\
\textbf{DADP (Ours)} & $\tau = 5\text{e-}5$ & \textbf{98.35 $\pm$ 0.04\%} & \textbf{96.69 $\pm$ 0.04\%} & \textbf{96.92 $\pm$ 0.05\%} \\
\textbf{DADP (Ours)} & $\tau = 1\text{e-}5$ (100 Epoch Limit Test) & \textbf{98.00\%} & \textbf{96.94\%} & \textbf{97.69\%} \\
\textbf{DADP (Ours)} & $\tau = 1\text{e-}4$ (Over-threshold) & \textbf{99.81\%} & \textbf{77.11\%} & \textbf{77.30\%} \\
\midrule
Magnitude Pruning & Target 70.0\% & 70.00\% & 98.58\% & 98.58\% \\
Magnitude Pruning & Target 80.0\% & 80.00\% & 98.63\% & 98.65\% \\
Magnitude Pruning & Target 90.0\% & 90.00\% & 98.49\% & 98.49\% \\
Magnitude Pruning & Target 95.0\% & 95.00\% & 98.05\% & 98.38\% \\
\midrule
RigL & Target 70.0\% & 70.00\% & 98.18\% & 98.25\% \\
RigL & Target 80.0\% & 80.00\% & 97.97\% & 98.21\% \\
RigL & Target 90.0\% & 90.00\% & 97.81\% & 98.00\% \\
RigL & Target 95.0\% & 95.00\% & 97.65\% & 97.94\% \\
\midrule
SNIP & Target 70.0\% & 70.00\% & 98.23\% & 98.40\% \\
SNIP & Target 80.0\% & 80.00\% & 97.85\% & 98.29\% \\
SNIP & Target 90.0\% & 90.00\% & 97.89\% & 98.18\% \\
SNIP & Target 95.0\% & 95.00\% & 97.82\% & 97.94\% \\
\bottomrule
\end{tabular}%
}
\end{table}

\subsection{ResNet-18 on CIFAR-10}

\begin{table}[H]
\caption{Complete empirical evaluation for ResNet-18 on CIFAR-10 across pruning methods and thresholds (DADP reports mean $\pm$ std across 3 random seeds).}
\label{tab:app_resnet_cifar}
\centering
\small
\resizebox{\linewidth}{!}{%
\begin{tabular}{llccc}
\toprule
\textbf{Method} & \textbf{Configuration / Threshold} & \textbf{Final Sparsity (\%)} & \textbf{Final Test Acc (\%)} & \textbf{Peak Test Acc (\%)} \\
\midrule
Dense Baseline & Unpruned (0\%) & 0.00\% & 76.06\% & 77.44\% \\
\midrule
\textbf{DADP (Ours)} & $\tau = 1\text{e-}6$ & \textbf{84.65 $\pm$ 0.21\%} & \textbf{76.72 $\pm$ 0.26\%} & \textbf{76.97 $\pm$ 0.25\%} \\
\textbf{DADP (Ours)} & $\tau = 5\text{e-}6$ & \textbf{89.66 $\pm$ 0.10\%} & \textbf{76.64 $\pm$ 0.75\%} & \textbf{77.36 $\pm$ 0.17\%} \\
\textbf{DADP (Ours)} & $\tau = 1\text{e-}5$ & \textbf{91.41 $\pm$ 0.08\%} & \textbf{76.76 $\pm$ 0.16\%} & \textbf{77.25 $\pm$ 0.24\%} \\
\textbf{DADP (Ours)} & $\tau = 5\text{e-}5$ & \textbf{95.63 $\pm$ 0.05\%} & \textbf{76.15 $\pm$ 0.34\%} & \textbf{77.08 $\pm$ 0.16\%} \\
\textbf{DADP (Ours)} & $\tau = 1\text{e-}4$ & \textbf{96.92 $\pm$ 0.01\%} & \textbf{76.03 $\pm$ 0.61\%} & \textbf{76.80 $\pm$ 0.21\%} \\
\textbf{DADP (Ours)} & $\tau = 5\text{e-}4$ (High Sparsity) & \textbf{99.35 $\pm$ 0.03\%} & \textbf{72.28 $\pm$ 0.28\%} & \textbf{72.90 $\pm$ 0.24\%} \\
\midrule
Magnitude Pruning & Target 70.0\% & 70.00\% & 77.41\% & 77.41\% \\
Magnitude Pruning & Target 80.0\% & 80.00\% & 77.51\% & 77.53\% \\
Magnitude Pruning & Target 90.0\% & 90.00\% & 77.07\% & 77.32\% \\
Magnitude Pruning & Target 95.0\% & 95.00\% & 77.18\% & 77.18\% \\
Magnitude Pruning & Target 99.0\% & 99.00\% & 66.42\% & 77.21\% \\
\midrule
RigL & Target 70.0\% & 70.00\% & 74.14\% & 74.76\% \\
RigL & Target 80.0\% & 80.00\% & 72.96\% & 74.01\% \\
RigL & Target 90.0\% & 90.00\% & 71.53\% & 72.34\% \\
RigL & Target 95.0\% & 95.00\% & 69.60\% & 70.45\% \\
RigL & Target 99.0\% & 99.00\% & 63.85\% & 64.36\% \\
\midrule
SNIP & Target 70.0\% & 70.00\% & 75.99\% & 76.94\% \\
SNIP & Target 80.0\% & 80.00\% & 75.92\% & 76.83\% \\
SNIP & Target 90.0\% & 90.00\% & 75.21\% & 76.10\% \\
SNIP & Target 95.0\% & 95.00\% & 75.49\% & 75.49\% \\
SNIP & Target 99.0\% & 99.00\% & 71.44\% & 72.30\% \\
\bottomrule
\end{tabular}%
}
\end{table}

\subsection{BiLSTM-CRF on CoNLL-2003}

\begin{table}[H]
\caption{Complete empirical evaluation for BiLSTM-CRF on CoNLL-2003 (DADP reports mean $\pm$ std across 3 random seeds).}
\label{tab:app_bilstm_conll}
\centering
\small
\resizebox{\linewidth}{!}{%
\begin{tabular}{llccc}
\toprule
\textbf{Method} & \textbf{Configuration / Threshold} & \textbf{Final Sparsity (\%)} & \textbf{Final Test F1 (\%)} & \textbf{Peak Test F1 (\%)} \\
\midrule
Dense Baseline & Unpruned (0\%) & 0.00\% & 85.16\% & 85.16\% \\
\midrule
\textbf{DADP (Ours)} & $\tau = 1\text{e-}6$ & \textbf{12.10 $\pm$ 3.16\%} & \textbf{93.77 $\pm$ 0.12\%} & \textbf{93.94 $\pm$ 0.13\%} \\
\textbf{DADP (Ours)} & $\tau = 5\text{e-}6$ & \textbf{26.19 $\pm$ 3.80\%} & \textbf{93.82 $\pm$ 0.06\%} & \textbf{93.96 $\pm$ 0.15\%} \\
\textbf{DADP (Ours)} & $\tau = 1\text{e-}5$ & \textbf{33.57 $\pm$ 4.06\%} & \textbf{93.82 $\pm$ 0.14\%} & \textbf{93.95 $\pm$ 0.13\%} \\
\textbf{DADP (Ours)} & $\tau = 5\text{e-}5$ & \textbf{95.00 $\pm$ 3.60\%} & \textbf{84.28 $\pm$ 1.34\%} & \textbf{93.82 $\pm$ 0.23\%} \\
\textbf{DADP (Ours)} & $\tau = 1\text{e-}4$ & \textbf{95.09 $\pm$ 0.98\%} & \textbf{87.99 $\pm$ 1.73\%} & \textbf{93.25 $\pm$ 0.05\%} \\
\textbf{DADP (Ours)} & $\tau = 3\text{e-}4$ (Over-threshold) & \textbf{100.00\%} & \textbf{82.82\%} & \textbf{92.34\%} \\
\midrule
Magnitude Pruning & Target 70.0\% & 70.00\% & 93.87\% & 94.18\% \\
Magnitude Pruning & Target 80.0\% & 80.00\% & 93.88\% & 94.47\% \\
Magnitude Pruning & Target 90.0\% & 90.00\% & 94.06\% & 94.18\% \\
Magnitude Pruning & Target 95.0\% & 95.00\% & 93.88\% & 93.88\% \\
\midrule
RigL & Target 70.0\% & 70.00\% & 93.26\% & 94.06\% \\
RigL & Target 80.0\% & 80.00\% & 93.44\% & 94.00\% \\
RigL & Target 90.0\% & 90.00\% & 93.78\% & 94.32\% \\
RigL & Target 95.0\% & 95.00\% & 93.64\% & 93.78\% \\
\midrule
SNIP & Target 70.0\% & 70.00\% & 93.50\% & 94.28\% \\
SNIP & Target 80.0\% & 80.00\% & 93.37\% & 93.77\% \\
SNIP & Target 90.0\% & 90.00\% & 93.19\% & 94.24\% \\
SNIP & Target 95.0\% & 95.00\% & 92.60\% & 94.02\% \\
\bottomrule
\end{tabular}%
}
\end{table}

\subsection{MiniBERT on SST-2}

\begin{table}[H]
\caption{Complete empirical evaluation for MiniBERT on SST-2 across pruning methods and thresholds (DADP reports mean $\pm$ std across 3 random seeds).}
\label{tab:app_bert_sst2}
\centering
\small
\resizebox{\linewidth}{!}{%
\begin{tabular}{llccc}
\toprule
\textbf{Method} & \textbf{Configuration / Threshold} & \textbf{Final Sparsity (\%)} & \textbf{Final Test Acc (\%)} & \textbf{Peak Test Acc (\%)} \\
\midrule
Dense Baseline & Unpruned (0\%) & 0.00\% & 80.62\% & 82.68\% \\
\midrule
\textbf{DADP (Ours)} & $\tau = 1\text{e-}6$ & \textbf{71.52 $\pm$ 4.93\%} & \textbf{79.01 $\pm$ 0.57\%} & \textbf{82.19 $\pm$ 0.39\%} \\
\textbf{DADP (Ours)} & $\tau = 2\text{e-}6$ & \textbf{91.69 $\pm$ 2.03\%} & \textbf{78.67 $\pm$ 0.66\%} & \textbf{81.57 $\pm$ 0.30\%} \\
\textbf{DADP (Ours)} & $\tau = 3\text{e-}6$ & \textbf{96.03 $\pm$ 0.79\%} & \textbf{80.85 $\pm$ 1.06\%} & \textbf{82.11 $\pm$ 0.37\%} \\
\textbf{DADP (Ours)} & $\tau = 4\text{e-}6$ (Over-threshold) & \textbf{96.87 $\pm$ 0.16\%} & \textbf{64.18 $\pm$ 12.70\%} & \textbf{64.22 $\pm$ 12.76\%} \\
\textbf{DADP (Ours)} & $\tau = 5\text{e-}6$ (Over-threshold) & \textbf{96.96\%} & \textbf{50.92\%} & \textbf{50.92\%} \\
\midrule
Magnitude Pruning & Target 70.0\% & 70.00\% & 50.92\% & 82.57\% \\
Magnitude Pruning & Target 80.0\% & 80.00\% & 50.92\% & 82.34\% \\
Magnitude Pruning & Target 90.0\% & 90.00\% & 50.92\% & 83.03\% \\
Magnitude Pruning & Target 95.0\% & 95.00\% & 50.92\% & 82.34\% \\
\midrule
RigL & Target 70.0\% & 70.00\% & 80.62\% & 80.62\% \\
RigL & Target 80.0\% & 80.00\% & 80.73\% & 81.42\% \\
RigL & Target 90.0\% & 90.00\% & 80.62\% & 81.77\% \\
RigL & Target 95.0\% & 95.00\% & 81.31\% & 81.31\% \\
\midrule
SNIP & Target 70.0\% & 70.00\% & 81.54\% & 82.22\% \\
SNIP & Target 80.0\% & 80.00\% & 81.54\% & 82.00\% \\
SNIP & Target 90.0\% & 90.00\% & 82.68\% & 82.80\% \\
SNIP & Target 95.0\% & 95.00\% & 81.65\% & 82.22\% \\
\bottomrule
\end{tabular}%
}
\end{table}

\section{Complete VGG-16 Empirical Evaluation \& Visual Profiles}
\label{sec:appendix_vgg16}

To provide a comprehensive examination of DADP's behavior on large feedforward convolutional networks, this section consolidates all empirical benchmarks, layer-wise sparsity allocations, information-theoretic representation metrics, and weight distribution spectra for VGG-16 on CIFAR-10.

\subsection{VGG-16 Numerical Benchmarks}

\begin{table}[!ht]
\caption{Complete empirical evaluation for VGG-16 on CIFAR-10 across pruning methods and thresholds (DADP reports mean $\pm$ std across 3 random seeds).}
\label{tab:app_vgg_cifar}
\centering
\small
\resizebox{\linewidth}{!}{%
\begin{tabular}{llccc}
\toprule
\textbf{Method} & \textbf{Configuration / Threshold} & \textbf{Final Sparsity (\%)} & \textbf{Final Test Acc (\%)} & \textbf{Peak Test Acc (\%)} \\
\midrule
Dense Baseline & Unpruned (0\%) & 0.00\% & 85.21\% & 85.21\% \\
\midrule
\textbf{DADP (Ours)} & $\tau = 1\text{e-}6$ & \textbf{74.26 $\pm$ 1.25\%} & \textbf{85.09 $\pm$ 0.17\%} & \textbf{85.11 $\pm$ 0.16\%} \\
\textbf{DADP (Ours)} & $\tau = 5\text{e-}6$ & \textbf{89.77 $\pm$ 0.08\%} & \textbf{84.70 $\pm$ 0.17\%} & \textbf{85.47 $\pm$ 0.34\%} \\
\textbf{DADP (Ours)} & $\tau = 6\text{e-}6$ & \textbf{91.65 $\pm$ 0.67\%} & \textbf{82.65 $\pm$ 0.65\%} & \textbf{84.29 $\pm$ 0.53\%} \\
\textbf{DADP (Ours)} & $\tau = 6.5\text{e-}6$ & \textbf{91.86 $\pm$ 0.54\%} & \textbf{83.22 $\pm$ 0.00\%} & \textbf{83.22 $\pm$ 0.00\%} \\
\textbf{DADP (Ours)} & $\tau = 6.7\text{e-}6$ & \textbf{92.16 $\pm$ 0.28\%} & \textbf{83.06 $\pm$ 2.01\%} & \textbf{84.02 $\pm$ 1.36\%} \\
\textbf{DADP (Ours)} & $\tau = 1\text{e-}5$ (Over-threshold) & \textbf{96.50\%} & \textbf{10.00\%} & \textbf{18.84\%} \\
\textbf{DADP (Ours)} & $\tau = 1\text{e-}4$ (Over-threshold) & \textbf{100.00\%} & \textbf{10.00\%} & \textbf{10.00\%} \\
\midrule
Magnitude Pruning & Target 70.0\% & 70.00\% & 86.03\% & 86.22\% \\
Magnitude Pruning & Target 80.0\% & 80.00\% & 85.75\% & 85.75\% \\
Magnitude Pruning & Target 90.0\% & 90.00\% & 85.56\% & 85.56\% \\
Magnitude Pruning & Target 95.0\% & 95.00\% & 82.83\% & 84.49\% \\
\midrule
RigL & Target 70.0\% & 70.00\% & 83.51\% & 84.07\% \\
RigL & Target 80.0\% & 80.00\% & 81.20\% & 83.67\% \\
RigL & Target 90.0\% & 90.00\% & 82.38\% & 82.38\% \\
RigL & Target 95.0\% & 95.00\% & 80.23\% & 80.58\% \\
\midrule
SNIP & Target 70.0\% & 70.00\% & 85.39\% & 85.90\% \\
SNIP & Target 80.0\% & 80.00\% & 84.83\% & 85.71\% \\
SNIP & Target 90.0\% & 90.00\% & 84.65\% & 85.11\% \\
SNIP & Target 95.0\% & 95.03\% & 14.06\% & 23.44\% \\
\bottomrule
\end{tabular}%
}
\end{table}

\subsection{VGG-16 Visual Profiles \& Representations}

\begin{figure}[H]
\centering
\begin{subfigure}[b]{\linewidth}
    \centering
    \includegraphics[width=\linewidth]{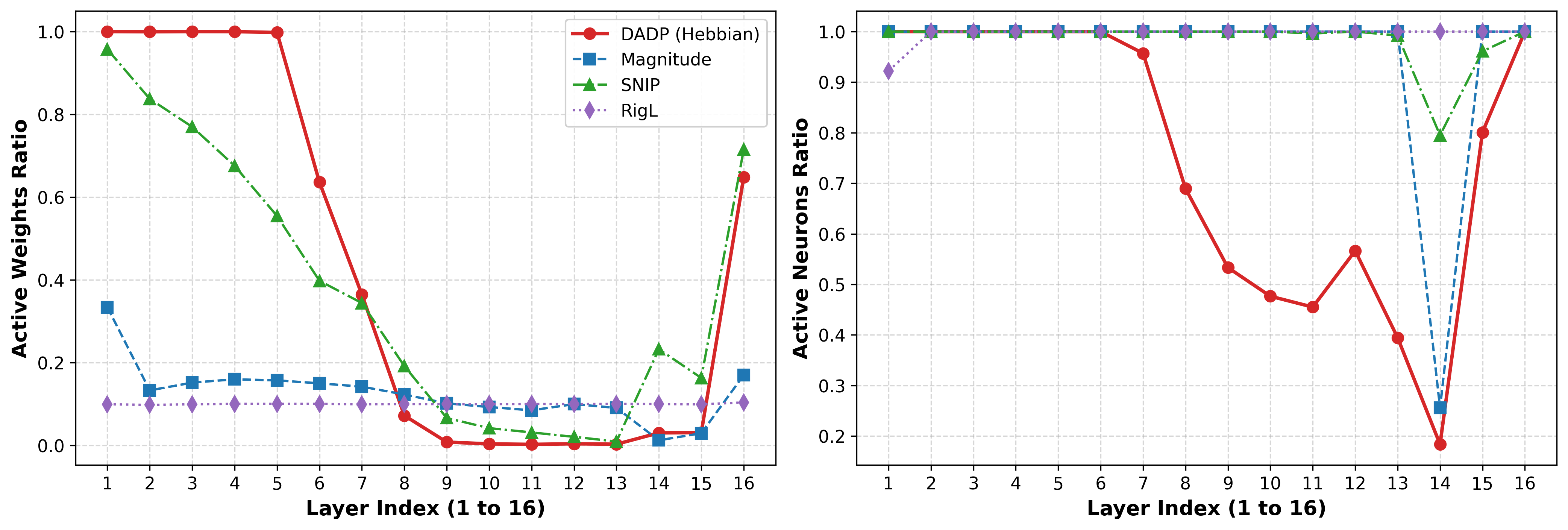}
    \caption{Layer-Wise Sparsity Profile ($90\%$ Sparsity)}
    \label{fig:app_vgg_layer_wise}
\end{subfigure}
\vspace{0.3cm}
\begin{subfigure}[b]{\linewidth}
    \centering
    \includegraphics[width=\linewidth]{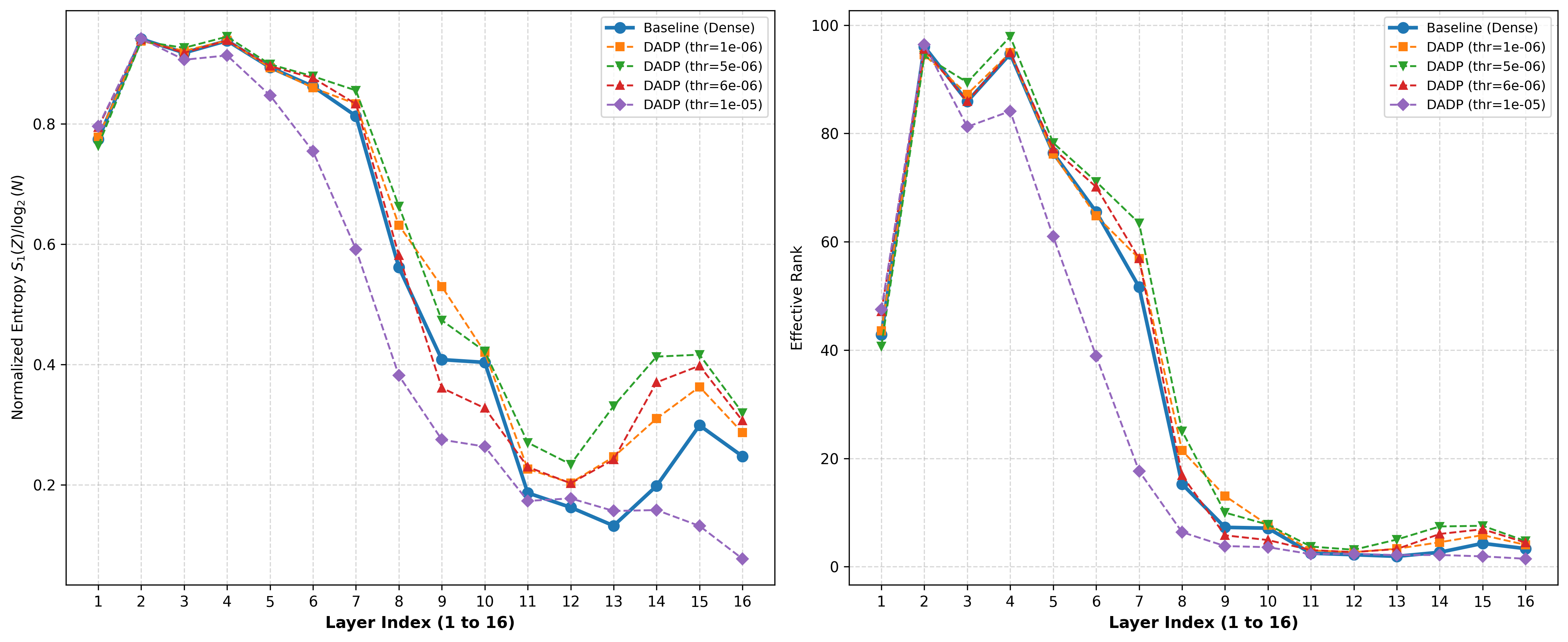}
    \caption{Information-Theoretic Representation Quality (Matrix Entropy \& Effective Rank)}
    \label{fig:app_vgg_entropy}
\end{subfigure}
\vspace{0.3cm}
\begin{subfigure}[b]{\linewidth}
    \centering
    \includegraphics[width=\linewidth]{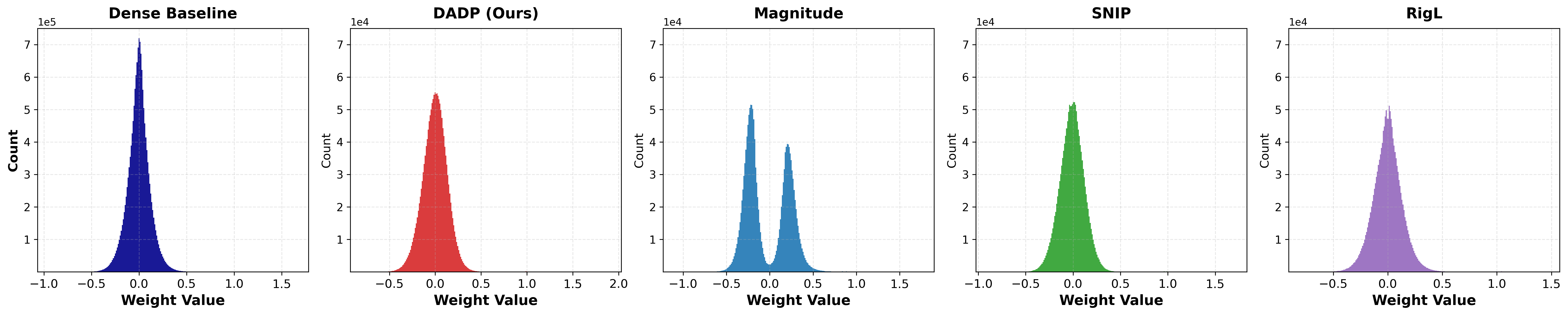}
    \caption{Global Active Weight Value Distributions across Pruning Methods}
    \label{fig:app_vgg_weight_dist}
\end{subfigure}
\caption{Comprehensive visual evaluation of VGG-16 on CIFAR-10. (a) DADP preserves classification accuracy up to $92.16\%$ global sparsity. (b) Layer-wise profile shows early spatial filter lock alongside heavy compression in deep linear projection layers (\texttt{classifier.0}). (c) Latent representation quality across all 16 layers confirms that DADP maintains normalized Shannon entropy ($S_{\text{norm}} \ge 0.85$) on par with dense unpruned features. (d) DADP preserves a smooth, zero-centered parameter distribution, avoiding artificial bimodal gaps around zero.}
\label{fig:app_vgg_comprehensive}
\end{figure}

\clearpage
\section{Detailed Channel-Level Structured Compression Tables}
\label{sec:appendix_structured_compression}

In this section, we present complete layer-by-layer physical structured compression shapes for VGG-16 (at $\sim 90\%$ global sparsity) and ResNet-18 (at $\sim 95\%$ global sparsity) across all evaluated pruning methods (DADP, SNIP, Magnitude, and RigL). Physical structured pruning deletes completely dead channels and neurons (where $100\%$ of constituent weights are inactive), compressing tensor dimensions directly on disk.

\subsection{VGG-16 on CIFAR-10 ($\sim 90\%$ Global Sparsity)}

\begin{table}[H]
\caption{Layer-by-layer physical structured compression shapes for VGG-16 on CIFAR-10 at $\sim 90\%$ global sparsity across pruning methods, with the number of pruned (dead) channels/neurons in parentheses.}
\label{tab:app_vgg_structured_compression}
\centering
\small
\resizebox{\linewidth}{!}{%
\begin{tabular}{lccccc}
\toprule
\textbf{Layer Name} & \textbf{Original Shape} & \textbf{DADP Compressed Shape} & \textbf{SNIP Compressed Shape} & \textbf{Magnitude Compressed Shape} & \textbf{RigL Compressed Shape} \\
\midrule
\texttt{features.0}  & $[64, 3, 3, 3]$   & \textbf{$[64, 3, 3, 3]$} {\scriptsize(0 dead)}     & $[64, 3, 3, 3]$ {\scriptsize(0 dead)}     & $[64, 3, 3, 3]$ {\scriptsize(0 dead)}     & $[59, 3, 3, 3]$ {\scriptsize(5 dead)} \\
\texttt{features.3}  & $[64, 64, 3, 3]$  & \textbf{$[64, 64, 3, 3]$} {\scriptsize(0 dead)}    & $[64, 64, 3, 3]$ {\scriptsize(0 dead)}    & $[64, 64, 3, 3]$ {\scriptsize(0 dead)}    & $[64, 59, 3, 3]$ {\scriptsize(0 dead)} \\
\texttt{features.7}  & $[128, 64, 3, 3]$ & \textbf{$[128, 64, 3, 3]$} {\scriptsize(0 dead)}   & $[128, 64, 3, 3]$ {\scriptsize(0 dead)}   & $[128, 64, 3, 3]$ {\scriptsize(0 dead)}   & $[128, 64, 3, 3]$ {\scriptsize(0 dead)} \\
\texttt{features.10} & $[128, 128, 3, 3]$& \textbf{$[128, 128, 3, 3]$} {\scriptsize(0 dead)}  & $[128, 128, 3, 3]$ {\scriptsize(0 dead)}  & $[128, 128, 3, 3]$ {\scriptsize(0 dead)}  & $[128, 128, 3, 3]$ {\scriptsize(0 dead)} \\
\texttt{features.14} & $[256, 128, 3, 3]$& \textbf{$[256, 128, 3, 3]$} {\scriptsize(0 dead)}  & $[256, 128, 3, 3]$ {\scriptsize(0 dead)}  & $[256, 128, 3, 3]$ {\scriptsize(0 dead)}  & $[256, 128, 3, 3]$ {\scriptsize(0 dead)} \\
\texttt{features.17} & $[256, 256, 3, 3]$& \textbf{$[256, 256, 3, 3]$} {\scriptsize(0 dead)}  & $[256, 256, 3, 3]$ {\scriptsize(0 dead)}  & $[256, 256, 3, 3]$ {\scriptsize(0 dead)}  & $[256, 256, 3, 3]$ {\scriptsize(0 dead)} \\
\texttt{features.20} & $[256, 256, 3, 3]$& \textbf{$[256, 256, 3, 3]$} {\scriptsize(0 dead)}  & $[256, 256, 3, 3]$ {\scriptsize(0 dead)}  & $[256, 256, 3, 3]$ {\scriptsize(0 dead)}  & $[256, 256, 3, 3]$ {\scriptsize(0 dead)} \\
\texttt{features.24} & $[512, 256, 3, 3]$& \textbf{$[498, 256, 3, 3]$} {\scriptsize(14 dead)} & $[512, 256, 3, 3]$ {\scriptsize(0 dead)}  & $[512, 256, 3, 3]$ {\scriptsize(0 dead)}  & $[512, 256, 3, 3]$ {\scriptsize(0 dead)} \\
\texttt{features.27} & $[512, 512, 3, 3]$& \textbf{$[444, 498, 3, 3]$} {\scriptsize(68 dead)} & $[512, 512, 3, 3]$ {\scriptsize(0 dead)}  & $[512, 512, 3, 3]$ {\scriptsize(0 dead)}  & $[512, 512, 3, 3]$ {\scriptsize(0 dead)} \\
\texttt{features.30} & $[512, 512, 3, 3]$& \textbf{$[399, 444, 3, 3]$} {\scriptsize(113 dead)}& $[512, 512, 3, 3]$ {\scriptsize(0 dead)}  & $[512, 512, 3, 3]$ {\scriptsize(0 dead)}  & $[512, 512, 3, 3]$ {\scriptsize(0 dead)} \\
\texttt{features.34} & $[512, 512, 3, 3]$& \textbf{$[391, 399, 3, 3]$} {\scriptsize(121 dead)}& $[510, 512, 3, 3]$ {\scriptsize(2 dead)}   & $[512, 512, 3, 3]$ {\scriptsize(0 dead)}  & $[512, 512, 3, 3]$ {\scriptsize(0 dead)} \\
\texttt{features.37} & $[512, 512, 3, 3]$& \textbf{$[338, 391, 3, 3]$} {\scriptsize(174 dead)}& $[512, 510, 3, 3]$ {\scriptsize(0 dead)}   & $[512, 512, 3, 3]$ {\scriptsize(0 dead)}  & $[512, 512, 3, 3]$ {\scriptsize(0 dead)} \\
\texttt{features.40} & $[512, 512, 3, 3]$& \textbf{$[293, 338, 3, 3]$} {\scriptsize(219 dead)}& $[508, 512, 3, 3]$ {\scriptsize(4 dead)}   & $[512, 512, 3, 3]$ {\scriptsize(0 dead)}  & $[512, 512, 3, 3]$ {\scriptsize(0 dead)} \\
\texttt{classifier.0}& $[512, 512]$      & \textbf{$[103, 293]$} {\scriptsize(409 dead)}      & $[407, 508]$ {\scriptsize(105 dead)}       & $[131, 512]$ {\scriptsize(381 dead)}       & $[512, 512]$ {\scriptsize(0 dead)} \\
\texttt{classifier.3}& $[512, 512]$      & \textbf{$[452, 103]$} {\scriptsize(60 dead)}       & $[492, 407]$ {\scriptsize(20 dead)}        & $[512, 131]$ {\scriptsize(0 dead)}        & $[512, 512]$ {\scriptsize(0 dead)} \\
\texttt{classifier.6}& $[10, 512]$       & \textbf{$[10, 452]$} {\scriptsize(0 dead)}         & $[10, 492]$ {\scriptsize(0 dead)}          & $[10, 512]$ {\scriptsize(0 dead)}          & $[10, 512]$ {\scriptsize(0 dead)} \\
\bottomrule
\end{tabular}%
}
\end{table}

\subsection{ResNet-18 on CIFAR-10 ($\sim 95\%$ Global Sparsity)}

\begin{table}[H]
\caption{Layer-by-layer physical structured compression shapes for ResNet-18 on CIFAR-10 at $\sim 95\%$ global sparsity across pruning methods, with the number of pruned (dead) channels/neurons in parentheses.}
\label{tab:app_resnet_structured_compression}
\centering
\small
\resizebox{\linewidth}{!}{%
\begin{tabular}{lccccc}
\toprule
\textbf{Layer Name} & \textbf{Original Shape} & \textbf{DADP Compressed Shape} & \textbf{SNIP Compressed Shape} & \textbf{Magnitude Compressed Shape} & \textbf{RigL Compressed Shape} \\
\midrule
\texttt{conv1}                 & $[64, 3, 3, 3]$    & \textbf{$[64, 3, 3, 3]$} {\scriptsize(0 dead)}     & $[64, 3, 3, 3]$ {\scriptsize(0 dead)}    & $[64, 3, 3, 3]$ {\scriptsize(0 dead)}    & $[64, 3, 3, 3]$ {\scriptsize(0 dead)} \\
\texttt{layer1.0.conv1}        & $[64, 64, 3, 3]$   & \textbf{$[64, 64, 3, 3]$} {\scriptsize(0 dead)}    & $[64, 64, 3, 3]$ {\scriptsize(0 dead)}   & $[64, 64, 3, 3]$ {\scriptsize(0 dead)}   & $[64, 64, 3, 3]$ {\scriptsize(0 dead)} \\
\texttt{layer1.0.conv2}        & $[64, 64, 3, 3]$   & \textbf{$[64, 64, 3, 3]$} {\scriptsize(0 dead)}    & $[64, 64, 3, 3]$ {\scriptsize(0 dead)}   & $[64, 64, 3, 3]$ {\scriptsize(0 dead)}   & $[64, 64, 3, 3]$ {\scriptsize(0 dead)} \\
\texttt{layer1.1.conv1}        & $[64, 64, 3, 3]$   & \textbf{$[64, 64, 3, 3]$} {\scriptsize(0 dead)}    & $[64, 64, 3, 3]$ {\scriptsize(0 dead)}   & $[64, 64, 3, 3]$ {\scriptsize(0 dead)}   & $[64, 64, 3, 3]$ {\scriptsize(0 dead)} \\
\texttt{layer1.1.conv2}        & $[64, 64, 3, 3]$   & \textbf{$[64, 64, 3, 3]$} {\scriptsize(0 dead)}    & $[64, 64, 3, 3]$ {\scriptsize(0 dead)}   & $[64, 64, 3, 3]$ {\scriptsize(0 dead)}   & $[64, 64, 3, 3]$ {\scriptsize(0 dead)} \\
\texttt{layer2.0.conv1}        & $[128, 64, 3, 3]$  & \textbf{$[128, 64, 3, 3]$} {\scriptsize(0 dead)}   & $[128, 64, 3, 3]$ {\scriptsize(0 dead)}  & $[128, 64, 3, 3]$ {\scriptsize(0 dead)}  & $[128, 64, 3, 3]$ {\scriptsize(0 dead)} \\
\texttt{layer2.0.conv2}        & $[128, 128, 3, 3]$ & \textbf{$[128, 128, 3, 3]$} {\scriptsize(0 dead)}  & $[128, 128, 3, 3]$ {\scriptsize(0 dead)} & $[128, 128, 3, 3]$ {\scriptsize(0 dead)} & $[128, 128, 3, 3]$ {\scriptsize(0 dead)} \\
\texttt{layer2.0.downsample.0} & $[128, 64, 1, 1]$  & \textbf{$[128, 64, 1, 1]$} {\scriptsize(0 dead)}   & $[128, 64, 1, 1]$ {\scriptsize(0 dead)}  & $[128, 64, 1, 1]$ {\scriptsize(0 dead)}  & $[128, 64, 1, 1]$ {\scriptsize(0 dead)} \\
\texttt{layer2.1.conv1}        & $[128, 128, 3, 3]$ & \textbf{$[128, 128, 3, 3]$} {\scriptsize(0 dead)}  & $[128, 128, 3, 3]$ {\scriptsize(0 dead)} & $[128, 128, 3, 3]$ {\scriptsize(0 dead)} & $[128, 128, 3, 3]$ {\scriptsize(0 dead)} \\
\texttt{layer2.1.conv2}        & $[128, 128, 3, 3]$ & \textbf{$[128, 128, 3, 3]$} {\scriptsize(0 dead)}  & $[128, 128, 3, 3]$ {\scriptsize(0 dead)} & $[128, 128, 3, 3]$ {\scriptsize(0 dead)} & $[128, 128, 3, 3]$ {\scriptsize(0 dead)} \\
\texttt{layer3.0.conv1}        & $[256, 128, 3, 3]$ & \textbf{$[191, 128, 3, 3]$} {\scriptsize(65 dead)}  & $[256, 128, 3, 3]$ {\scriptsize(0 dead)} & $[256, 128, 3, 3]$ {\scriptsize(0 dead)} & $[256, 128, 3, 3]$ {\scriptsize(0 dead)} \\
\texttt{layer3.0.conv2}        & $[256, 256, 3, 3]$ & \textbf{$[164, 191, 3, 3]$} {\scriptsize(92 dead)}  & $[256, 256, 3, 3]$ {\scriptsize(0 dead)} & $[256, 256, 3, 3]$ {\scriptsize(0 dead)} & $[256, 256, 3, 3]$ {\scriptsize(0 dead)} \\
\texttt{layer3.0.downsample.0} & $[256, 128, 1, 1]$ & \textbf{$[159, 128, 1, 1]$} {\scriptsize(97 dead)}  & $[256, 128, 1, 1]$ {\scriptsize(0 dead)} & $[256, 128, 1, 1]$ {\scriptsize(0 dead)} & $[256, 128, 1, 1]$ {\scriptsize(0 dead)} \\
\texttt{layer3.1.conv1}        & $[256, 256, 3, 3]$ & \textbf{$[101, 164, 3, 3]$} {\scriptsize(155 dead)} & $[256, 256, 3, 3]$ {\scriptsize(0 dead)} & $[256, 256, 3, 3]$ {\scriptsize(0 dead)} & $[256, 256, 3, 3]$ {\scriptsize(0 dead)} \\
\texttt{layer3.1.conv2}        & $[256, 256, 3, 3]$ & \textbf{$[142, 101, 3, 3]$} {\scriptsize(114 dead)} & $[256, 256, 3, 3]$ {\scriptsize(0 dead)} & $[256, 256, 3, 3]$ {\scriptsize(0 dead)} & $[256, 256, 3, 3]$ {\scriptsize(0 dead)} \\
\texttt{layer4.0.conv1}        & $[512, 256, 3, 3]$ & \textbf{$[138, 142, 3, 3]$} {\scriptsize(374 dead)} & $[512, 256, 3, 3]$ {\scriptsize(0 dead)} & $[512, 256, 3, 3]$ {\scriptsize(0 dead)} & $[512, 256, 3, 3]$ {\scriptsize(0 dead)} \\
\texttt{layer4.0.conv2}        & $[512, 512, 3, 3]$ & \textbf{$[256, 138, 3, 3]$} {\scriptsize(256 dead)} & $[512, 512, 3, 3]$ {\scriptsize(0 dead)} & $[512, 512, 3, 3]$ {\scriptsize(0 dead)} & $[512, 512, 3, 3]$ {\scriptsize(0 dead)} \\
\texttt{layer4.0.downsample.0} & $[512, 256, 1, 1]$ & \textbf{$[400, 142, 1, 1]$} {\scriptsize(112 dead)} & $[512, 256, 1, 1]$ {\scriptsize(0 dead)} & $[512, 256, 1, 1]$ {\scriptsize(0 dead)} & $[512, 256, 1, 1]$ {\scriptsize(0 dead)} \\
\texttt{layer4.1.conv1}        & $[512, 512, 3, 3]$ & \textbf{$[69, 256, 3, 3]$}  {\scriptsize(443 dead)} & $[512, 512, 3, 3]$ {\scriptsize(0 dead)} & $[505, 512, 3, 3]$ {\scriptsize(7 dead)}  & $[512, 512, 3, 3]$ {\scriptsize(0 dead)} \\
\texttt{layer4.1.conv2}        & $[512, 512, 3, 3]$ & \textbf{$[145, 69, 3, 3]$}  {\scriptsize(367 dead)} & $[505, 512, 3, 3]$ {\scriptsize(7 dead)} & $[495, 505, 3, 3]$ {\scriptsize(17 dead)} & $[512, 512, 3, 3]$ {\scriptsize(0 dead)} \\
\texttt{fc}                    & $[10, 512]$        & \textbf{$[10, 145]$}        {\scriptsize(0 dead)}   & $[10, 505]$        {\scriptsize(0 dead)} & $[10, 495]$        {\scriptsize(0 dead)} & $[10, 512]$        {\scriptsize(0 dead)} \\
\bottomrule
\end{tabular}%
}
\end{table}

\clearpage
\section{Initialization Sensitivity \& Architectural Layer Mapping}
\label{sec:appendix_sensitivity_mapping}

\subsection{Initialization Sensitivity Ablations}

\begin{table}[H]
\caption{Weight initialization scheme ablation evaluation for VGG-16 and ResNet-18 on CIFAR-10.}
\label{tab:app_init_ablation}
\centering
\small
\resizebox{\linewidth}{!}{%
\begin{tabular}{llccc}
\toprule
\textbf{Model} & \textbf{Initialization Scheme} & \textbf{Pruning Threshold $\tau$} & \textbf{Final Sparsity (\%)} & \textbf{Final Test Acc (\%)} \\
\midrule
ResNet-18 & Kaiming Normal & $\tau = 5\text{e-}5$ & 95.41\% & 75.10\% \\
ResNet-18 & Kaiming Uniform & $\tau = 5\text{e-}5$ & 95.35\% & 76.08\% \\
ResNet-18 & Xavier Normal & $\tau = 5\text{e-}5$ & 95.66\% & 77.05\% \\
ResNet-18 & Xavier Uniform & $\tau = 5\text{e-}5$ & 95.54\% & 76.72\% \\
ResNet-18 & Orthogonal & $\tau = 5\text{e-}5$ & 95.75\% & 76.51\% \\
ResNet-18 & Normal ($\sigma = 0.02$) & $\tau = 5\text{e-}5$ & 95.96\% & 77.26\% \\
ResNet-18 & Normal ($\sigma = 0.1$) & $\tau = 5\text{e-}5$ & 94.67\% & 74.42\% \\
\midrule
VGG-16 & Kaiming Normal & $\tau = 5\text{e-}6$ & 84.86\% & 84.80\% \\
VGG-16 & Kaiming Uniform & $\tau = 5\text{e-}6$ & 85.03\% & 85.30\% \\
VGG-16 & Xavier Normal & $\tau = 5\text{e-}6$ & 86.58\% & 84.84\% \\
VGG-16 & Xavier Uniform & $\tau = 5\text{e-}6$ & 86.73\% & 85.38\% \\
VGG-16 & Orthogonal & $\tau = 5\text{e-}6$ & 86.61\% & 85.63\% \\
VGG-16 & Normal ($\sigma = 0.02$) & $\tau = 5\text{e-}6$ & \textbf{100.00\%} & \textbf{10.00\%} \\
VGG-16 & Normal ($\sigma = 0.1$) & $\tau = 5\text{e-}6$ & 77.68\% & 83.44\% \\
\bottomrule
\end{tabular}%
}
\end{table}

\subsection{Layer Index to Architectural Name Mapping}
\label{sec:appendix_layer_mapping}

\begin{table}[H]
\caption{Sequential layer index mapping ($1, 2, 3, \dots, L$) to full parameter layer names for ResNet-18, VGG-16, and MiniBERT.}
\label{tab:app_layer_name_mapping}
\centering
\small
\resizebox{\linewidth}{!}{%
\begin{tabular}{c l l l}
\toprule
\textbf{Layer Index} & \textbf{ResNet-18 (CIFAR-10)} & \textbf{VGG-16 (CIFAR-10)} & \textbf{MiniBERT (SST-2)} \\
\midrule
1 & \texttt{conv1} & \texttt{features.0 (conv1\_1)} & \texttt{embeddings.word\_embeddings} \\
2 & \texttt{layer1.0.conv1} & \texttt{features.3 (conv1\_2)} & \texttt{embeddings.position\_embeddings} \\
3 & \texttt{layer1.0.conv2} & \texttt{features.7 (conv2\_1)} & \texttt{encoder.layer.0.attention.query} \\
4 & \texttt{layer1.1.conv1} & \texttt{features.10 (conv2\_2)} & \texttt{encoder.layer.0.attention.key} \\
5 & \texttt{layer1.1.conv2} & \texttt{features.14 (conv3\_1)} & \texttt{encoder.layer.0.attention.value} \\
6 & \texttt{layer2.0.conv1} & \texttt{features.17 (conv3\_2)} & \texttt{encoder.layer.0.output.dense} \\
7 & \texttt{layer2.0.conv2} & \texttt{features.20 (conv3\_3)} & \texttt{encoder.layer.0.intermediate.dense} \\
8 & \texttt{layer2.0.downsample.0} & \texttt{features.24 (conv4\_1)} & \texttt{encoder.layer.1.attention.query} \\
9 & \texttt{layer2.1.conv1} & \texttt{features.27 (conv4\_2)} & \texttt{encoder.layer.1.attention.key} \\
10 & \texttt{layer2.1.conv2} & \texttt{features.30 (conv4\_3)} & \texttt{encoder.layer.1.attention.value} \\
11 & \texttt{layer3.0.conv1} & \texttt{features.34 (conv5\_1)} & \texttt{encoder.layer.1.output.dense} \\
12 & \texttt{layer3.0.conv2} & \texttt{features.37 (conv5\_2)} & \texttt{encoder.layer.1.intermediate.dense} \\
13 & \texttt{layer3.0.downsample.0} & \texttt{features.40 (conv5\_3)} & \texttt{classifier} \\
14 & \texttt{layer3.1.conv1} & \texttt{classifier.0 (fc1)} & -- \\
15 & \texttt{layer3.1.conv2} & \texttt{classifier.3 (fc2)} & -- \\
16 & \texttt{layer4.0.conv1} & \texttt{classifier.6 (fc3)} & -- \\
17 & \texttt{layer4.0.conv2} & -- & -- \\
18 & \texttt{layer4.0.downsample.0} & -- & -- \\
19 & \texttt{layer4.1.conv1} & -- & -- \\
20 & \texttt{layer4.1.conv2} & -- & -- \\
21 & \texttt{fc} & -- & -- \\
\bottomrule
\end{tabular}%
}
\end{table}

\end{document}